\documentclass{article}

\usepackage[preprint]{neurips_2026}

\usepackage[utf8]{inputenc}
\usepackage[T1]{fontenc}
\usepackage{amsmath,amssymb,amsfonts}
\usepackage{amsthm}
\usepackage{url}
\usepackage{booktabs}
\usepackage{graphicx}
\usepackage{algorithm}
\usepackage{algpseudocode}
\usepackage{nicefrac}
\usepackage{microtype}
\usepackage{xcolor}
\usepackage{dsfont}  
\usepackage{listings}
\usepackage[font=small]{caption}
\usepackage{wrapfig}
\usepackage{hyperref}

\newtheorem{theorem}{Theorem}
\newtheorem{claim}{Claim}

\makeatletter
\renewcommand{\@makefnmark}{\hbox{$^{\@thefnmark}$}}
\makeatother

\makeatletter
\renewcommand{\@makefnmark}{\hbox{$^{\@thefnmark}$}}
\makeatother

\title{DualKV: Shared-Prompt Flash Attention for Efficient RL Training with Large Rollouts and Long Contexts}
\author{
  Jiading Gai\textsuperscript{1,*} \quad
  Shuai Zhang\textsuperscript{1,*} \quad
  Xiang Song\textsuperscript{2}\thanks{Work done while at AWS.} \quad
  Bernie Wang\textsuperscript{1} \quad
  George Karypis\textsuperscript{3}\footnotemark[2]
  \\[4pt] 
  \textsuperscript{1}\textit{Amazon Web Services} \quad
  \textsuperscript{2}\textit{Google} \quad
\textsuperscript{3}\textit{University of Minnesota} \\
  \texttt{\{jiadingg, shuaizs, yuyawang\}@amazon.com} \quad \texttt{xiangsx@google.com} \quad \texttt{karypis@umn.edu} \\[2pt]
  \textsuperscript{*}\textit{Equal contribution.} \\[2pt]
  \textcolor{blue}{\url{https://github.com/amazon-science/dualkv-flash-attn-for-rl}}
}

\begin{document}
\maketitle

\begin{abstract}
Modern RL post-training methods such as GRPO and DAPO train on $N$ response sequences of $R$ tokens sampled from a shared prompt of $P$ tokens, but standard FlashAttention replicates all $P$ prompt tokens $N$ times across both forward and backward passes --- duplicating compute and memory on identical hidden states.
In large-rollout, long-context RL training ($N{\geq}16$, $P{\geq}8\text{K}$), this redundancy dominates the policy update cost.
We observe that in decoder-only models, causal masking makes prompt representations invariant across sequences at every layer, so all per-token operations (norms, projections, MLP) and attention can process the prompt once --- a property not yet exploited at the kernel level for training.
We propose \textbf{DualKV}, the first FlashAttention kernel variant that eliminates shared-prompt replication during RL training, via (1)~fused CUDA forward and backward kernels that iterate over two disjoint KV regions --- shared context and per-sequence response --- in a single kernel launch, and (2)~a data-pipeline redesign in veRL that repacks $N(P{+}R)$ tokens into $P{+}NR$ tokens per micro-batch, extending the token reduction from attention to the entire model by a factor $\rho = N(P{+}R)/(P{+}NR)$.
DualKV is mathematically equivalent to standard attention and introduces no approximation.
On Qwen3-8B GRPO training with 8$\times$H100 GPUs ($N{=}32$, 8K-context), DualKV achieves $1.63$--$2.09\times$ policy-update speedup, enables $2\times$ larger micro-batches, and raises MFU from $36\%$ to $76\%$. Similar gains hold for DAPO ($2.47\times$ speedup, $77\%$ MFU). At 30B MoE scale on 16$\times$H100, DualKV achieves $3.82\times$ policy-update and $3.38\times$ end-to-end step speedup over FlashAttention (which requires 4-way Ulysses sequence parallelism to avoid OOM).
DualKV also extends to hybrid sliding/global attention with head dimension 512 (which FA2 does not
support) and integrates with Ulysses sequence parallelism, demonstrated on Gemma-4-31B GRPO at 64K context.
\end{abstract}

\section{Introduction}
\label{sec:intro}

Modern RL post-training methods such as GRPO~\citep{shao2024deepseekmath} and DAPO~\citep{yu2025dapo} compute log-probabilities and gradients for $N$ response sequences that share a prompt of $P$ tokens. With standard FlashAttention-2 (FA2)~\citep{dao2023flashattention2}, the training micro-batch contains $N$ sequences of length $S_i = P + R_i$, repeating the $P$-token prompt $N$ times. Each forward and backward pass recomputes $K, V$ activations and their gradients over $(N{-}1) \times P$ redundant prompt tokens per layer.

Recent work demonstrates that model accuracy continues to improve with increasingly large rollout factors. \citet{lightman2023lets} show that best-of-$N$ performance on the MATH dataset scales log-linearly up to $N{=}1860$. GRPO and related methods similarly benefit from large $N$ for more accurate reward signal estimation (e.g., DAPO uses $N{=}16$). However, the compute cost of the standard $N$-copy packing grows as $O(N \cdot S^2)$: each of the $N$ sequences independently recomputes attention over the full $P$ prompt tokens, duplicating $(N{-}1) \times P$ prompt tokens in both compute and memory at every layer. As we show in Section~\ref{sec:e2e-longreason}, eliminating this replication cuts policy-update time by $2\times$.

Prompt KV sharing for inference has been addressed from multiple angles: paged attention~\citep{kwon2023vllm} and prefix caching avoid redundant prompt KV storage via copy-on-write block tables, while bifurcated attention~\citep{athiwaratkun2024bifurcated} decomposes attention into shared-prompt and per-sequence phases for parallel decoding. However, these mechanisms target inference only, where shared prompt KV is read-only; they have no training backward, no autograd-compatible gradient accumulation for the shared prompt, and cannot be dropped into a RL policy-update pipeline (Appendix~\ref{app:attention-comparison}). Training raises three new challenges: (1) the backward pass has $N$ sequences concurrently accumulating gradients into the shared KV buffer --- a pattern absent from inference that requires both atomic accumulation for race-free writes and fp32 accumulators with a final cast to match FA2's per-element precision; (2) autograd must aggregate gradients correctly across the context self-attention and decoded-attention calls that both touch the shared KV; and (3) the RL training framework's data pipeline must be restructured to group same-prompt responses per micro-batch (Appendix~\ref{app:pipeline-invariance}). DualKV addresses all three at the kernel and system level.

We observe that in decoder-only models, prompt hidden states are \emph{identical} across all $N$ sequences at every
layer: causal masking ensures each prompt token attends only to preceding prompt tokens, never to the differing response
tokens that follow, so the prompt representations are independent of which response is generated. Prefix
Grouper~\citep{liu2025prefixgrouper} applies this at the framework level but provides no kernel support --- it still
passes the full $N$-replicated KV through standard FA2, retaining the $O(N \cdot P \cdot d)$ memory bottleneck that
causes OOM and forces sequence parallelism. DualKV contributes the first FlashAttention kernel variant that eliminates
this replication, with fused forward and backward CUDA kernels that read shared KV from a single physical buffer and
accumulate gradients from $N$ concurrent sequences via fp32 atomic writes. By packing the micro-batch with a single
prompt copy and decomposing attention into context self-attention (computed once) and a fused DualKV kernel for decoded
attention, DualKV eliminates redundant prompt computation throughout the entire model --- not just attention, but also
norms, projections, and MLP layers --- yielding both memory and compute savings without approximation.
DualKV also supports head dimension 512 and hybrid sliding/global attention (Gemma-4, beyond FA2's
$d{\le}256$ limit) and composes with Ulysses sequence parallelism for ultra-long context; we validate both with
end-to-end GRPO on Gemma-4-31B at up to 64K, extending DualKV to the hybrid sliding/global attention adopted by
recent model architectures.

\section{Eliminating Prompt Replication in RL Training: The DualKV Design}
\label{sec:training}

Each GRPO training step processes $N$ response sequences of length $S_i = P + R_i$ that share a prompt of $P$ tokens. The step runs three passes before the optimizer step: (1)~\textbf{old\_log\_prob}: forward through $\pi_{\theta_{\text{old}}}$ for the importance ratio; (2)~\textbf{ref\_log\_prob}: forward through $\pi_{\text{ref}}$ for the KL penalty; and (3)~\textbf{policy update}: forward+backward through $\pi_\theta$ producing the policy gradient. All three passes see the same packed micro-batch with $(N{-}1) \cdot P$ redundant prompt tokens, so DualKV accelerates all three. The same applies to any RL method that packs $N$ same-prompt responses, e.g., DAPO.

\subsection{Baseline: Standard Packing with Replicated Prompts}

Standard implementations (e.g., veRL with FA2) pack the $N$ sequences into a single \texttt{flash\_attn\_varlen\_func} call using cumulative sequence lengths:
\begin{equation}
  \underbrace{[P, R_1]}_{\text{seq 1}},\; \underbrace{[P, R_2]}_{\text{seq 2}},\; \ldots,\; \underbrace{[P, R_N]}_{\text{seq } N}
  \quad\text{--- total } T_{\text{std}} = N(P + R) \text{ tokens}
  \label{eq:std-packing}
\end{equation}
The prompt tokens are \emph{replicated} $N$ times: each sequence carries its own copy of the prompt's hidden states, QKV projections, and attention computation. Every per-token operation (Norm, QKV projection, RoPE, MLP, output projection) processes all $N(P+R)$ tokens. For attention, FA2 computes $N$ independent causal self-attention passes, one per sequence $i$:
\begin{equation}
  O^{(i)} \;=\; \text{softmax}\!\left(\frac{Q^{(i)} (K^{(i)})^\top}{\sqrt{d}} + M_{\text{causal}}\right) V^{(i)}, \qquad i = 1, \ldots, N,
  \label{eq:baseline-fa2}
\end{equation}
where $Q^{(i)}, K^{(i)}, V^{(i)} \in \mathbb{R}^{(P+R_i) \times H \times d}$ are sequence $i$'s packed projections and $M_{\text{causal}}$ is the per-sequence causal mask. FA2 computes Eq.~\ref{eq:baseline-fa2} via tile-wise online softmax. FLOPs: $O(N \cdot S_i^2 \cdot H \cdot d)$.

\subsection{DualKV: Single-Prompt Packing}
\label{sec:dualkv-packing}

\textbf{Prompt invariance and redundancy.} In Eq.~\ref{eq:baseline-fa2}, split the output rows of $O^{(i)}$ into prompt-query rows $O_P^{(i)}$ (the first $P$) and response-query rows $O_{R_i}^{(i)}$ (the remaining $R_i$). For the prompt-query rows, causal masking blocks attention to response keys, so only the prompt keys participate:
\begin{equation}
  O_P \;=\; \text{softmax}\!\left(\frac{Q_P (K_P)^\top}{\sqrt{d}} + M_{\text{causal},P}\right) V_P,
  \label{eq:prompt-subblock}
\end{equation}
where $Q_P, K_P, V_P$ are the prompt-row projections. Because all $N$ sequences share the same prompt tokens, $Q_P, K_P, V_P$ are identical across $i$ (Appendix~\ref{app:prompt-invariant}), so $O_P$ is the same for every sequence. The $N-1$ prompt-on-prompt computations in the baseline are therefore redundant.

For the response-query rows, each response attends causally to \emph{both} the shared prompt keys and its own response keys:
\begin{equation}
  O_{R_i}^{(i)} \;=\; \text{softmax}\!\left(\frac{Q_{R_i}^{(i)} [K_P;\, K_{R_i}^{(i)}]^\top}{\sqrt{d}} + M_{\text{causal}}\right) [V_P;\, V_{R_i}^{(i)}],
  \label{eq:response-subblock}
\end{equation}
where $[\cdot\,;\,\cdot]$ denotes concatenation along the token axis. This block is not redundant --- each response has distinct queries --- but it reuses the same $K_P, V_P$ for all $i$.

\textbf{DualKV packing.} DualKV packs the micro-batch as a single shared prompt followed by $N$ per-sequence responses:
\begin{equation}
  \underbrace{[P]}_{\text{prompt (once)}},\; \underbrace{[R_1]}_{\text{resp 1}},\; \ldots,\; \underbrace{[R_N]}_{\text{resp } N}
  \quad\text{--- total } T_{\text{dk}} = P + NR \text{ tokens}
  \label{eq:dualkv-packing}
\end{equation}
All per-token operations (norms, projections, RoPE, MLP, output projection) now process $P + NR$ tokens instead of $N(P+R)$ --- saving $(N{-}1)P$ tokens throughout the entire model.

\textbf{Two-call decomposition.} Eq.~\ref{eq:prompt-subblock} and Eq.~\ref{eq:response-subblock} can be computed separately to eliminate the redundancy while preserving exact attention:
\begin{itemize}
  \item \textbf{Call 1 --- context self-attention} (\texttt{flash\_attn\_varlen\_func}, standard FA2): computes Eq.~\ref{eq:prompt-subblock} over a single copy of the prompt, \emph{once}. FLOPs: $O(P^2 \cdot H \cdot d)$.
  \item \textbf{Call 2 --- decoded attention} (\texttt{flash\_attn\_dualkv\_varlen\_func}, DualKV kernel): computes Eq.~\ref{eq:response-subblock} for all $N$ responses, attending to the shared $K_P, V_P$ (from Call~1) plus per-sequence $K_{R_i}, V_{R_i}$. FLOPs: $O(N \cdot R \cdot S \cdot H \cdot d)$.
\end{itemize}

\textbf{Why a new kernel.} Eq.~\ref{eq:response-subblock}'s attention pattern --- response queries reading from the concatenation $[K_P; K_{R_i}]$ --- cannot be consumed by standard FA2 without materializing $N$ copies of $K_P, V_P$ (reintroducing the replication we just eliminated). The DualKV kernel iterates over the two physically disjoint KV regions ($K_P, V_P$ shared across the batch; $K_{R_i}, V_{R_i}$ per-sequence) within a single launch. The kernel's interface and algorithms are detailed in Section~\ref{sec:kernel}.

\textbf{Data pipeline.} This packing requires that all $N$ responses from the same prompt are co-located on the same GPU within the same micro-batch. veRL's rollout engine already produces prompt-contiguous sequences; DualKV preserves this ordering by skipping veRL's \texttt{balance\_batch} step (which reorders by sequence length) and disabling intra-epoch shuffle in the mini-batch iterator. The resulting pipeline preserves the exact mini-batch gradient estimator (Appendix~\ref{app:pipeline-invariance}). When the micro-batch contains multiple prompt groups, DualKV handles each group independently with its own context KV and per-group \texttt{cu\_seqlens}, supporting heterogeneous prompt lengths within a single forward pass.

\textbf{Gradient correctness.} The shared $K_P, V_P$ receive gradients from both calls --- autograd sums the two contributions, while the DualKV kernel internally accumulates the $N$ per-sequence terms within Call~2:
\begin{equation}
  \frac{\partial \mathcal{L}}{\partial K_P} \;=\; \left(\frac{\partial \mathcal{L}}{\partial K_P}\right)_{\!\text{Call 1}} \;+\; \sum_{i=1}^{N} \left(\frac{\partial \mathcal{L}}{\partial K_P}\right)^{(i)}_{\!\text{Call 2}}
  \label{eq:dkp-accumulation}
\end{equation}
(same structure for $V_P$). The first term is Call~1's prompt self-attention gradient; the $\sum_{i=1}^{N}$ collects Call~2's per-response contributions, creating an $N$-way concurrent write into shared $K_P, V_P$ --- a pattern with no analog in FA2's backward, resolved at the kernel level in Section~\ref{sec:kernel-backward}. The total is mathematically identical to the FA2 baseline (Appendix~\ref{sec:equiv-theorem}).

\textbf{When DualKV helps.} The token reduction ratio $\rho = N(P+R)/(P+NR)$ governs the per-token operation speedup (norms, projections, RoPE, MLP, output projection) throughout the entire model; attention has additional structural savings (Appendix~\ref{app:theoretical-analysis}). $\rho$ is largest in two regimes: \textbf{large rollout} ($N \geq 16$, common in GRPO and DAPO; at $N{=}32, P{=}16\text{K}$ the reduction reaches $7.2\times$) and \textbf{long prompts} ($P \geq 16\text{K}$, prevalent in agentic tasks~\citep{jimenez2024swebench} and repository-level code generation~\citep{liu2023repobench}; at $P{=}64\text{K}, N{=}16$ the reduction reaches $14.3\times$). In practice, $\rho$ is bounded by the per-GPU micro-batch size (Section~\ref{sec:e2e-longreason}). See Appendix~\ref{app:rho-scaling} for the full analysis.

\subsection{Composing DualKV with Sequence Parallelism}
\label{sec:dualkv-sp-method}

DualKV and sequence parallelism (SP) are two orthogonal techniques for scaling long-context training:
DualKV deduplicates the shared prompt \emph{across} the $N$ sequences of a group, while Ulysses SP shards
\emph{one} packed sequence across ranks. Scaling long-context training commonly requires SP, which
adds step latency through per-layer all-to-all communication. In
GRPO/DAPO, where $N$ responses share a large $P$-token prompt, the dominant redundancy is prompt replication:
SP shards this replicated $N(P{+}R)$ sequence across ranks but never removes the redundancy, paying all-to-all
communication to distribute it, whereas DualKV eliminates the replication at its source ($N(P{+}R)\to P{+}NR$)
--- a more efficient approach for shared-prompt RL. SP remains necessary at ultra-long context, where
even the deduplicated $P{+}NR$ sequence exceeds one rank, so DualKV and SP must be used together. Making them
work together requires two adaptations.

\textbf{Repack before slice; all-to-all inside attention.} The DualKV repack (Section~\ref{sec:dualkv-packing})
runs \emph{before} the Ulysses sequence slice, so the shared prompt stays a single contiguous block; the
resulting $P{+}NR$ sequence is then padded to a multiple of the SP degree and sliced contiguously across ranks
like any sequence, and all packed side tensors (rolled labels, per-token temperature) are sliced identically
so they stay aligned. Attention is restored exactly as in standard Ulysses SP: an all-to-all before the DualKV
kernel gathers the full token sequence while scattering heads (each rank holds all $P{+}NR$ tokens but $H/\text{SP}$
heads), the two-region DualKV kernel runs unchanged on that per-rank shard, and a second all-to-all after attention
scatters the sequence back and gathers heads. Grouped-query attention is handled by repeating KV heads to a
multiple of the SP degree before the head scatter, exactly as in the FA2 SP path. The DualKV repack, kernel,
and per-group \texttt{cu\_seqlens} are unchanged under SP; the only SP-specific operations are the two
all-to-alls wrapping attention.

\textbf{Shared prompt logits under SP.} The other place DualKV interacts non-trivially with SP is
the fused log-probability head, which --- during the training forward pass --- computes each
already-generated token's log-prob for the policy loss without materializing full logits. DualKV's packing
places each prompt group's prompt once, so the first token
of \emph{every} response in the group is predicted by the logits at the last prompt position
$P{-}1$; the fused log-prob head computes these first-token log-probs from that shared row (the remaining tokens
use the standard shifted log-probs). Under SP the packed sequence is sliced contiguously, so this global
position $P{-}1$ resides on exactly \emph{one} rank, while the responses that need it are spread across
\emph{all} ranks. We resolve this with a lightweight collective. Let $p_g$ be the global index of group $g$'s
last prompt token in the packed sequence (Eq.~\ref{eq:dualkv-packing}), and
$\mathrm{parts}{=}\lceil(P{+}NR)/\mathrm{SP}\rceil$ the per-rank shard length; then rank $\mathrm{owner}(g){=}\lfloor
p_g/\mathrm{parts}\rfloor$ holds that token's hidden state at local index
$\mathrm{local}(g){=}p_g \bmod \mathrm{parts}$. Each rank
projects only the shared rows it owns and an all-reduce over the SP group reconstructs, for every group, the
exact SP=1 shared hidden state $h_{p_g}$:
\begin{equation}
\tilde{h}_g \;=\; \sum_{r=0}^{\mathrm{SP}-1} \mathds{1}[\mathrm{owner}(g){=}r]\; h^{(r)}_{\mathrm{local}(g)},
\qquad \ell_g \;=\; \tilde{h}_g\, W_{\mathrm{LM}}^\top / \tau,
\label{eq:dualkv-sp-gather}
\end{equation}
where $h^{(r)}$ is rank $r$'s local hidden shard and $\ell_g$ the shared prompt's logits at the last position (temperature
$\tau$) from which each response's first-token log-prob is read. Since each position has exactly one owner,
the indicator sum reconstructs $h_{p_g}$ exactly. The payload is $G$ hidden vectors for $G$ prompt groups, so
communication is $O(G\,d)$, independent of context length. A parity test confirms the first-token log-probs
agree with the SP=1 path to max$|\Delta|{=}0$ across SP${\in}\{1,2,4,8\}$
(Section~\ref{sec:dualkv-sp}).

\section{New Computational Primitive: The DualKV Kernel}
\label{sec:kernel}

This section describes the DualKV CUDA kernel that executes Call~2 of the two-call decomposition (Section~\ref{sec:dualkv-packing}). Implemented as an extension of FA2's codebase, the kernel accepts five input tensors --- a decoded query $q$, the shared context $K_c, V_c$ (single copy, stored once), and per-sequence varlen-packed decoded $K_d, V_d$, plus a \texttt{context\_seqlen}${}=P$ parameter that shifts the causal mask so decoded queries attend to all $P$ context keys and their own preceding decoded keys. We write $K_c, V_c$ and $K_d, V_d$ (following FlashAttention's naming) for $K_P, V_P$ and $K_{R_i}, V_{R_i}$ in Section~\ref{sec:dualkv-packing}. Full Python signature in Appendix~\ref{app:kernel-interface}; pseudocode in Appendix~\ref{app:kernel-algorithms}; theoretical analysis in Appendix~\ref{app:theoretical-analysis}.

\textbf{Head dimensions and sliding-window support for hybrid architectures.} Recent open models interleave
two attention layer types that fall outside FA2's supported range. Gemma-4-31B, for example, stacks
full-attention layers at head dimension $512$ with sliding-window layers at head dimension $256$ --- and FA2
supports only $d \le 256$, so it
cannot serve the $d{=}512$ global layers at all. To bring shared-prompt deduplication
to such architectures, we add two DualKV kernel variants specifically for Gemma-4 support: a head-dim-$512$
full-attention path for the global layers, and kernel-native causal sliding-window attention for the $d{=}256$ local
layers (mathematically equivalent to windowed FA2, verified against it). This lets a single DualKV forward run the full
hybrid stack, dispatching per layer by head dimension and window. Implementation details --- tile sizes and the
shared-memory/register strategy that makes $d{=}512$ fit --- are in Appendix~\ref{app:hdim-swa}.

\subsection{Forward Pass}
\label{sec:kernel-forward}

The DualKV forward kernel extends FA2's tiled online-softmax algorithm (tile size $B_N$) to iterate over two physically disjoint KV regions within a single kernel launch. For each decoded query $q_r^{(i)}$ (position $r$ in sequence $i$), the kernel:(1) \textbf{Context phase}: Iterate over $\lceil P / B_N \rceil$ tiles of $k_{\text{ctx}}, v_{\text{ctx}}$ (tile-level slices of $K_c, V_c$; lowercase denotes per-tile data following FA2's convention), shared across all $N$ sequences in the grid. Accumulate running softmax statistics $(m, \ell, \mathbf{o})$ using the online-softmax algorithm; (2) \textbf{Decoded phase}: Continue iteration over $\lceil R_i / B_N \rceil$ tiles of sequence $i$'s own $k_{\text{dec}}^{(i)}, v_{\text{dec}}^{(i)}$. The running statistics carry over from the context phase, so the two-phase iteration produces the same output as if $[k_{\text{ctx}}; k_{\text{dec}}^{(i)}]$ were stored contiguously; (3) \textbf{Causal masking}: All context positions ($j < P$) are unmasked for every decoded query; decoded positions are masked causally using the $P$-shifted logical position $P + r$ (the \texttt{context\_seqlen} parameter).

The kernel decouples the physical block index from the logical key position: physical block $n$ has logical start $n \cdot B_N$ when $n < \lceil P / B_N \rceil$ (context) and $P + (n - \lceil P / B_N \rceil) \cdot B_N$ otherwise (decoded), so the causal mask operator sees the two physically disjoint regions as a single contiguous causal sequence.

\textbf{Per-tile out-of-bounds (OOB) masking.} Because the context and decoded regions are tiled to $B_N$ independently, up to two tiles within a single kernel instance can be partial --- the last context tile (when $P \bmod B_N \neq 0$) and the sequence's last decoded tile (when $R_i \bmod B_N \neq 0$) --- rather than FA2's single-tile boundary. The kernel bounds-checks every tile load and applies the per-tile OOB mask on every iteration, forgoing FA2's fast-path unmasked-steps loop. This cost is negligible compared to the $(N{-}1) \cdot \lceil P / B_N \rceil$ context-tile passes eliminated.

The kernel supports grouped-query attention (GQA) and variable decoded lengths (via \texttt{cu\_seqlens\_q}). Its output matches Eq.~\ref{eq:response-subblock} exactly; full pseudocode is in Algorithm~\ref{alg:dualkv-forward} (Appendix~\ref{app:kernel-algorithms}), and the full derivation is in Appendix~\ref{app:dualkv-math}, \S A.3.

\subsection{Backward Pass with fp32 Atomic Accumulation}
\label{sec:kernel-backward}

The backward pass computes gradients for all five inputs ($dQ$, $dK_c$, $dV_c$, $dK_d$, $dV_d$) via the main DualKV backward kernel (Algorithm~\ref{alg:dualkv-backward}) and three auxiliary launches (described below). Two pieces distinguish it from FA2's backward: (1) \textbf{Two-region grid iteration} mirroring the forward (Section~\ref{sec:kernel-forward}): per-block pointer dispatch selects $K_c/V_c$ versus $K_d/V_d$, logical key positions feed the causal mask, and per-tile OOB masking handles two independent partial tiles. (2) \textbf{fp32 atomic accumulation} for the shared context gradients, resolving a concurrent-write race and the precision loss from accumulating $N$ partial gradients in bf16, both unique to DualKV's shared context buffer.

\textbf{Context gradient structure.} The loss gradient with respect to the shared context KV is a sum over all $N$ sequences in the batch:
\begin{equation}
  \frac{\partial L}{\partial K_c} = \sum_{i=1}^{N} \left(\frac{\partial L}{\partial K_c}\right)^{(i)}, \quad
  \frac{\partial L}{\partial V_c} = \sum_{i=1}^{N} \left(\frac{\partial L}{\partial V_c}\right)^{(i)},
  \label{eq:dualkv-ctx-grad}
\end{equation}
where $(\partial L / \partial K_c)^{(i)}$ is the contribution from sequence $i$'s decoded queries attending to the context (closed form in Appendix~\ref{app:dualkv-math}, \S A.4). This additive structure drives the backward kernel design: the $N$ summands are computed by $N$ separate thread blocks, creating the concurrent-write problem Algorithm~\ref{alg:dualkv-backward} resolves.

\begin{algorithm}[H]
\small
\caption{DualKV backward main kernel (launched on grid $(n_{\text{ctx}} + n_{\text{dec}},\; N,\; H)$)}
\label{alg:dualkv-backward}
\begin{algorithmic}[1]
\Require $dO$, $O$, $Q$, saved \texttt{softmax\_lse}, $K_c, V_c, K_d, V_d$
\Require fp32 scratch $dK_{c,\text{acc}}, dV_{c,\text{acc}}$ (shared across batch; zero-initialized); $dQ_{\text{acc}}$ (per batch, fp32)
\Require $D = \text{rowsum}(dO \odot O)$ (from \texttt{dot\_do\_o}, reused from FA2)
\For{each $(n, b, h)$ \textbf{in parallel}} \Comment{grid: (K-col block, batch, head)}
    \State $n_{\text{ctx}} \gets \lceil P / B_N \rceil$;\quad \texttt{is\_context} $\gets (n < n_{\text{ctx}})$
    \If{\texttt{is\_context}}
        \State $(K_n, V_n) \gets (K_c, V_c)[n B_N : (n{+}1) B_N]$;\quad $j_{\text{base}} \gets n B_N$
    \Else
        \State $(K_n, V_n) \gets (K_d^{(b)}, V_d^{(b)})[(n{-}n_{\text{ctx}}) B_N : \ldots]$;\quad $j_{\text{base}} \gets P + (n{-}n_{\text{ctx}}) B_N$
    \EndIf
    \State $acc_{dK}, acc_{dV} \gets 0$ in fp32
    \For{each Q-row block $Q_m$ in causal support of $K_n$}
        \State Load $Q_m, dO_m, \texttt{softmax\_lse}[m], D_m$ from HBM to on-chip SRAM
        \State On chip, compute $P = \exp(Q_m K_n^\top / \sqrt{d} - \texttt{lse}_m)$ with causal mask \Comment{per-tile OOB mask if partial}
        \State $acc_{dV} \mathrel{+}= P^\top dO_m$
        \State $dP \gets dO_m V_n^\top$;\quad $dS \gets P \odot (dP - D_m)$
        \State $acc_{dK} \mathrel{+}= dS^\top Q_m / \sqrt{d}$
        \State \texttt{atomicAdd}($dQ_{\text{acc}}^{(b)}[m]$,\; $dS\, K_n / \sqrt{d}$) \Comment{FA2's existing $dQ$ pattern}
    \EndFor
    \If{\texttt{is\_context}} \Comment{\textbf{$N$ blocks (one per $b$) race on same address}}
        \State \texttt{atomicAdd}($dK_{c,\text{acc}}[n B_N]$,\; $acc_{dK}$)\label{line:dkc-atomic}
        \State \texttt{atomicAdd}($dV_{c,\text{acc}}[n B_N]$,\; $acc_{dV}$)
    \Else \Comment{Decoded block: exclusive per-$b$ region, no race}
        \State $dK_d^{(b)}[\ldots] \gets \text{bf16}(acc_{dK})$;\quad $dV_d^{(b)}[\ldots] \gets \text{bf16}(acc_{dV})$
    \EndIf
\EndFor
\end{algorithmic}
\end{algorithm}

\textbf{Concurrent-write problem (race on line~\ref{line:dkc-atomic}).} In a standard FA2 backward, the grid is $(n, b, h)$ and each sequence $i$ holds a private $K_c^{(i)}, V_c^{(i)}$, so the batch index maps each block to a separate $dK_c^{(i)}$ region --- each block can cast its fp32 accumulator directly to bf16 and write with no contention. DualKV collapses this batch dimension for the shared context: the single $dK_c$ and $dV_c$ buffer has no batch offset, yet $N$ thread blocks --- one per decoded sequence --- concurrently reach Algorithm~\ref{alg:dualkv-backward}'s context epilogue. A standard direct-write epilogue would produce a data race. Resolving the race via bf16 \texttt{atomicAdd} would compound $N{-}1$ bf16 rounding steps on top of the initial cast, turning the cross-sequence reduction itself into a half-precision accumulation.

\textbf{fp32 atomic accumulation.} We resolve this with a two-stage design. Stage 1 (Algorithm~\ref{alg:dualkv-backward}): context blocks \texttt{atomicAdd} into an fp32 scratch buffer; decoded blocks bypass the scratch and write bf16 directly (they own disjoint $dK_d/dV_d$ regions). Stage 2 (Algorithm~\ref{alg:dualkv-bwd-convert}): after all Stage 1 blocks complete, a separate kernel casts the fp32 scratch to bf16. This reproduces FA2's single-cast precision profile despite the $N$-way concurrent write: all $N$ context contributions are summed in fp32, and only the final value is cast to bf16.

\begin{algorithm}[H]
\small
\caption{\texttt{convert\_dkv\_context}: fp32-to-bf16 cast, launched on grid $(n_{\text{ctx}},\; 1,\; H)$ after Algorithm~\ref{alg:dualkv-backward} completes (batch dimension collapsed because context is shared)}
\label{alg:dualkv-bwd-convert}
\begin{algorithmic}[1]
\Require fp32 scratch $dK_{c,\text{acc}}, dV_{c,\text{acc}}$ (aggregated over $N$ sequences by Algorithm~\ref{alg:dualkv-backward})
\For{each $(n, h)$ \textbf{in parallel}, and each $(\text{token}, \text{dim})$ in tile}
    \State $dK_c[n B_N{+}\text{token}, \text{dim}] \gets \text{bf16}(dK_{c,\text{acc}}[n B_N{+}\text{token}, \text{dim}])$
    \State $dV_c[n B_N{+}\text{token}, \text{dim}] \gets \text{bf16}(dV_{c,\text{acc}}[n B_N{+}\text{token}, \text{dim}])$
\EndFor
\end{algorithmic}
\end{algorithm}
\vspace{-1em}

\textbf{Analogy to FA2's $dQ$ pipeline.} FA2 uses an fp32-then-cast pipeline for $dQ$ under K-block-parallel scheduling: the main backward kernel atomic-adds into $dQ_{\text{acc}}$ (Algorithm~\ref{alg:dualkv-backward}'s $dQ$ \texttt{atomicAdd} line), and a separate \texttt{convert\_dq} kernel casts $dQ_{\text{acc}}$ to bf16. DualKV replicates this pipeline for $dK_c, dV_c$ --- now indexed by \emph{batch element} (not K-block) --- with Algorithm~\ref{alg:dualkv-backward}'s context-epilogue atomicAdd and the separate \texttt{convert\_dkv\_context} kernel (Algorithm~\ref{alg:dualkv-bwd-convert}) casting to bf16. The fp32 accumulator keeps the context gradient within standard fp32 rounding of the FA2 baseline; formal equivalence is proven in Section~\ref{sec:equiv-theorem}.

\section{Experiments}
\label{sec:experiments}

We first benchmark the DualKV attention kernel in isolation (Section~\ref{sec:kernel-speedup}), then evaluate end-to-end DAPO and GRPO training at H100 scale on LongReason (Section~\ref{sec:e2e-longreason}) and on multi-node MoE (Section~\ref{sec:e2e-moe-multinode}). We also study the memory scaling with context length (Section~\ref{sec:memory-scaling}). Appendix~\ref{app:gsm8k-shortprompt} confirms DualKV benefits extend to the short-prompt regime (GSM8K, A100). Our primary baseline is FA2 in veRL's FSDP worker~\citep{sheng2024hybridflow}; we also include FA3 in Section~\ref{sec:e2e-longreason}.

\subsection{Kernel-Level Speedup: Forward, Backward, and Combined}
\label{sec:kernel-speedup}

Table~\ref{tab:benchmark-fwdbwd} reports separate forward-only, backward-only, and combined wall-clock times on a single NVIDIA A100 GPU with Qwen3-8B model dimensions ($H{=}32$, $H_k{=}8$, $d{=}128$, GQA $4{:}1$), fp16 with gradient computation enabled. The backward pass has different compute characteristics than the forward --- it computes gradients $\partial \mathcal{L}/\partial Q$, $\partial \mathcal{L}/\partial K$, $\partial \mathcal{L}/\partial V$ and involves the gradient accumulation for shared $K_c, V_c$ described in Section~\ref{sec:kernel-backward} (Appendix~\ref{app:dualkv-math}, Eqs.~\ref{eq:dkc-total}--\ref{eq:dvc-total}). DualKV gradients match FA2 to half-precision rounding (verified via \texttt{torch.allclose} with $\text{atol}{=}10^{-3}$, $\text{rtol}{=}10^{-3}$).

\vspace{-1em}
\begin{table}[ht]
\centering
\scriptsize
\caption{FA2 vs.\ DualKV wall-clock time (ms) and peak memory across varying rollout factor $N$ and prompt length $P$; response length $R{=}2048$ throughout. OOM = out of memory.}
\label{tab:benchmark-fwdbwd}
\begin{tabular}{@{}rr|rrr|rrr|rrr|r@{}}
\toprule
& & \multicolumn{3}{c|}{\textbf{FA2 (ms)}} & \multicolumn{3}{c|}{\textbf{DualKV (ms)}} & \multicolumn{3}{c|}{\textbf{Speedup}} & \textbf{Mem} \\
$N$ & $P$ & fwd & bwd & f+b & fwd & bwd & f+b & fwd & bwd & f+b & $\downarrow$ \\
\midrule
28 & 4K   & 43.6  & 128.6  & 172.2  & 27.1  & 77.1   & 104.2  & 1.61$\times$ & 1.67$\times$ & 1.65$\times$ & 63\% \\
28 & 16K  & 381.3 & 1059.6 & 1441.0 & 98.7  & 273.0  & 371.7  & 3.86$\times$ & 3.88$\times$ & 3.88$\times$ & 85\% \\
16 & 32K  & 771.5 & 2111.8 & 2883.4 & 139.5 & 386.2  & 525.8  & 5.53$\times$ & 5.47$\times$ & 5.48$\times$ & 86\% \\
16 & 64K  & \multicolumn{3}{c|}{OOM} & 363.3 & 994.5  & 1357.8 & \multicolumn{3}{c|}{$\infty$} & --- \\
\bottomrule
\end{tabular}
\end{table}

Combined fwd+bwd speedup ranges from $1.65\times$ ($P{=}4\text{K}$) to $5.48\times$ ($P{=}32\text{K}$), with up to $86\%$ peak memory reduction; at $P{\geq}32\text{K}$ standard FA2 OOMs entirely. Backward speedup tracks forward within $3\%$ as expected: the attention backward computes the same $QK^\top$ products as the forward. The backward-to-forward ratio (${\sim}2.7\times$) is stable across all configurations for both FA2 and DualKV, confirming that DualKV's two-region backward kernel adds no structural overhead beyond attention's intrinsic backward cost. Gradient accumulation for shared $K_c, V_c$ (Section~\ref{sec:kernel-backward}) adds no measurable overhead --- fp32 atomic writes and the follow-up bf16 cast are bandwidth-bound on context positions only and land inside the backward's GEMM timing.

\begin{wraptable}{r}{0.52\textwidth}
\vspace{-1em}
\centering
\scriptsize
\setlength{\tabcolsep}{3pt}
\caption{Single-layer fwd+bwd: DualKV vs.\ PG vs.\ FA2 (Qwen3-8B, H100). Full sweep in Appendix~\ref{app:prefix-grouper-comparison}.}
\label{tab:prefix-grouper-main}
\begin{tabular}{@{}rr|rrr|rr|rr@{}}
\toprule
& & \multicolumn{3}{c|}{\textbf{Time (ms)}} & \multicolumn{2}{c|}{\textbf{DK Speedup}} & \multicolumn{2}{c}{\textbf{Mem (GB)}} \\
$P$ & mb & FA2 & DualKV & PG & vs FA2 & vs PG & DK & PG \\
\midrule
5K  & 32 & 597  & \textbf{180} & 626  & 3.32$\times$ & 3.48$\times$ & \textbf{13.7} & 69.8 \\
8K  & 16 & 549  & \textbf{128} & 465  & 4.28$\times$ & 3.63$\times$ & \textbf{9.9}  & 50.7 \\
16K & 8  & 553  & \textbf{125} & 435  & 4.41$\times$ & 3.47$\times$ & \textbf{9.0}  & 45.9 \\
32K & 4  & 810  & \textbf{246} & 514  & 3.29$\times$ & 2.08$\times$ & \textbf{12.5} & 43.0 \\
65K & 4  & OOM  & \textbf{654} & OOM  & $\infty$     & $\infty$     & \textbf{22.7} & OOM \\
131K& 8  & OOM  & \textbf{2097}& OOM  & $\infty$     & $\infty$     & \textbf{44.5} & OOM \\
\bottomrule
\end{tabular}
\end{wraptable}

\textbf{Comparison with Prefix Grouper.}
Table~\ref{tab:prefix-grouper-main} benchmarks against Prefix Grouper (PG)~\citep{liu2025prefixgrouper}, a framework-level shared-prompt optimization for training (no kernel-level deduplication) --- FA2 still receives $N$-replicated KV on padded tensors, so projections, MLP, and norms process the full $N$-copy prompt. DualKV is $2$--$4\times$ faster and uses $3$--$5\times$ less memory than PG across all configurations where both run. At high micro-batch ($P{=}5\text{K}$, mb=32), PG's padded representation makes it even slower than FA2 varlen. At $P{\geq}32\text{K}$, both FA2 and PG OOM while DualKV continues to scale. PG only optimizes attention (${\sim}30\%$ of layer FLOPs), while DualKV eliminates redundancy across all operations via packed varlen representation.

\subsection{End-to-End GRPO and DAPO Training on LongReason}
\label{sec:e2e-longreason}

To validate DualKV in a higher-$\rho$ regime with longer prompts and larger rollout factors, we evaluate on LongReason~\citep{ling2025longreason} --- a long-context mathematical reasoning dataset with prompts up to 8,192 tokens and response lengths up to 2,048 tokens --- using both GRPO and DAPO~\citep{yu2025dapo} (which removes the KL reference pass).
\begin{wrapfigure}{r}{0.4\textwidth}
\centering
\includegraphics[width=\linewidth]{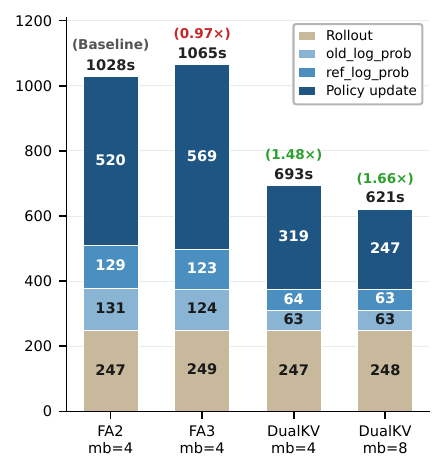}
\caption{Per-phase step breakdown.}
\label{fig:step-breakdown-main}
\vspace{-1em}
\end{wrapfigure}

\textbf{Setup.} We train Qwen3-8B on a single \texttt{p5.48xlarge} instance (8$\times$H100-SXM5-80GB) with FSDP2 in BF16 and gradient checkpointing. Rollout generation uses vLLM with tensor parallelism $=2$ and $N{=}32$ responses per prompt. Training uses \texttt{train\_batch\_size}${}=128$ and \texttt{ppo\_mini\_batch\_size}${}=64$ (both fixed across all configurations), and we sweep \texttt{ppo\_micro\_batch\_size\_per\_gpu} $\in \{4, 8\}$. Each configuration runs for 25 training steps with validation every 2 steps. Although the rollout factor is $N{=}32$, the token reduction ratio $\rho$ (Eq.~\ref{eq:rho}) is bounded by the \emph{micro-batch} size --- only sequences co-located in the same forward/backward pass share a prompt. With $P{=}8192$, $R{=}2048$: $\rho = 2.5\times$ at mb=4 and $\rho = 3.3\times$ at mb=8, setting upper bounds on the measured policy-update speedup.

\textbf{Comparisons.} We compare four configurations (\texttt{mb} = per-GPU micro-batch size): FA2 (mb=4), the largest micro-batch FA2 can handle (mb=8 OOMs); FA3 (mb=4), evaluating the Hopper-specific kernel; DualKV (mb=4), an apple-to-apple comparison; and DualKV (mb=8), a configuration FA2 cannot run.

\textbf{Results.} Table~\ref{tab:e2e-longreason} and Figure~\ref{fig:longreason-comparison} summarize both experiments. All configs track identical training accuracy curves (Figure~\ref{fig:longreason-comparison}A), confirming DualKV introduces no convergence degradation for either method (formal proof in Appendix~\ref{app:pipeline-invariance}).

\begin{figure}[!t]
\centering
\makebox[\textwidth][c]{\includegraphics[width=1.0\textwidth]{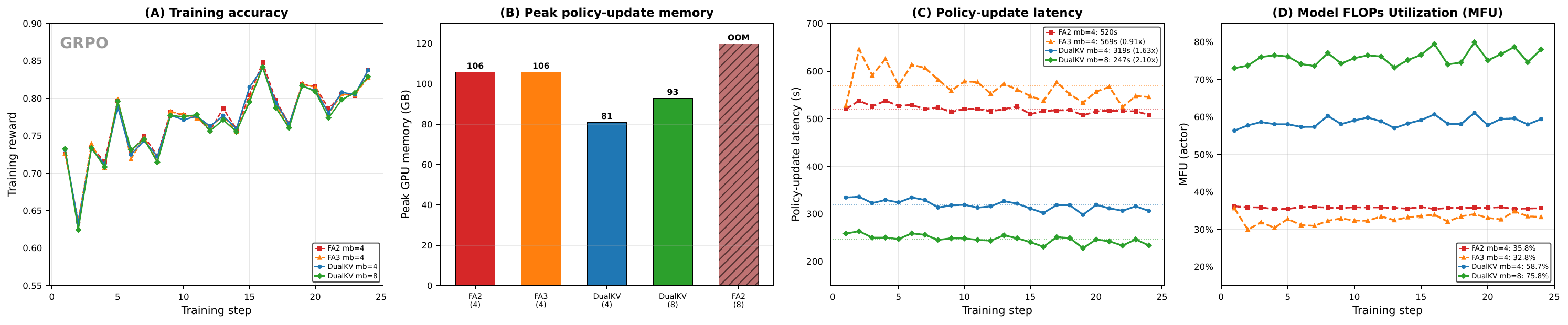}}
\makebox[\textwidth][c]{\includegraphics[width=1.0\textwidth]{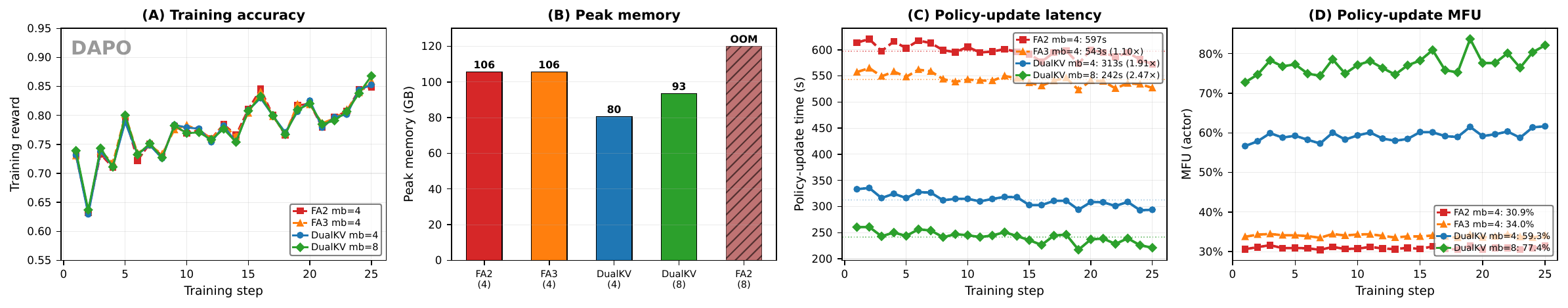}}
\caption{LongReason training (Qwen3-8B, 8$\times$H100), four configurations. \textbf{Top row}: GRPO ($N{=}32$, steps 1--24). \textbf{Bottom row}: DAPO ($N{=}32$, steps 1--25). Panels: \textbf{(A)}~Training reward --- all configs track identically. \textbf{(B)}~Peak memory. \textbf{(C)}~Policy-update latency. \textbf{(D)}~Policy-update MFU. GRPO: DualKV mb=8 achieves $2.09\times$ speedup, $75.8\%$ MFU. DAPO: $2.47\times$ speedup, $77.4\%$ MFU.}
\label{fig:longreason-comparison}
\end{figure}

\textbf{Speedups, memory, and MFU.} At mb=4, DualKV cuts policy-update time by $1.63\times$ (GRPO) and $1.91\times$ (DAPO), while reducing peak memory from $106$\,GB to $81$\,GB --- a 25\,GB reduction that brings the workload within the H100's 80\,GB physical HBM. FA2's 106\,GB allocation exceeds HBM capacity via memory oversubscription, degrading throughput. Doubling to mb=8 --- which FA2 OOMs on --- extends the speedup to $2.09\times$ (GRPO) and $2.47\times$ (DAPO), halving the gradient-accumulation passes per mini-batch ($2 \to 1$). Policy-update MFU rises from $36\%$/$31\%$ (FA2) to $76\%$/$77\%$ (DualKV mb=8) for GRPO/DAPO respectively --- a $2.1$--$2.5\times$ utilization gain. DAPO's higher speedups reflect the absence of the ref phase: the policy update dominates a larger fraction of step time, so DualKV's per-phase compression translates more directly to wall-clock gains.

\begin{wraptable}{r}{0.6\textwidth}
\centering
\scriptsize
\setlength{\tabcolsep}{3pt}
\caption{End-to-end LongReason results. \texttt{Step} = total training-step time (s). \texttt{MFU} = policy-update Model FLOPs Utilization. Parentheses = speedup vs FA2 mb=4. Per-phase breakdown is in Figure~\ref{fig:step-breakdown-main}.}
\label{tab:e2e-longreason}
\begin{tabular}{@{}lc c c c c@{}}
\toprule
\textbf{Config} & \textbf{mb/GPU} & \textbf{Peak Mem} & \textbf{Step} & \textbf{MFU} & \textbf{Val Acc} \\
 & & \textbf{(GB)} & \textbf{(s)} & \textbf{(\%)} & \textbf{(step 24)} \\
\midrule
FA2      & 4 & 106\rlap{$^\dagger$} & 1017 (1.00$\times$) & 35.8 & 0.777 \\
FA3      & 4 & 106\rlap{$^\dagger$} & 1067 (0.95$\times$) & 32.8 & 0.793 \\
DualKV   & 4 & 81                  &  687 (1.48$\times$) & 58.8 & 0.780 \\
DualKV   & 8 & 93                  &  622 (1.64$\times$) & 75.8 & 0.780 \\
FA2      & 8 & \multicolumn{4}{c}{OOM during \texttt{loss.backward()}} \\
\bottomrule
\end{tabular}
\vspace{2pt}

{\scriptsize $^\dagger$PyTorch allocator reservation; actual HBM use stays within 80\,GB.}
\end{wraptable}
\textbf{End-to-end step speedup.} Total training step time drops $1.64\times$/$1.82\times$ (GRPO/DAPO) at mb=8. The remaining gap versus the per-phase speedups is explained by rollout generation (${\sim}240$\,s/step, unaffected by DualKV), whose share of step time grows as training phases accelerate --- DualKV shifts the bottleneck from the policy update to rollout generation (Figure~\ref{fig:step-breakdown-main}). Full per-phase DAPO breakdowns are in Appendix~\ref{app:dapo-longreason}.

\subsection{Multi-Node RL Training with MoE Models}
\label{sec:e2e-moe-multinode}

\begin{wrapfigure}{r}{0.5\textwidth}
\vspace{-1.5em}
\centering
\includegraphics[width=\linewidth]{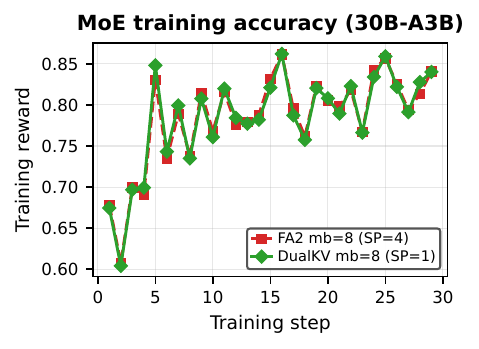}
\caption{Per-step training accuracy.}
\label{fig:moe-accuracy}
\end{wrapfigure}
To validate DualKV at production scale with cross-node communication and MoE expert routing, we train Qwen3-30B-A3B~\citep{yang2025qwen3} with GRPO on two \texttt{p5.48xlarge} nodes (16$\times$H100 GPUs, Elastic Fabric Adapter): FSDP2 + BF16 + gradient checkpointing; rollout via vLLM (TP${=}2$, $N{=}32$). All runs: \texttt{train\_batch\_size}${=}128$, \texttt{ppo\_mini\_batch\_size}${=}64$, \texttt{ppo\_micro\_batch\_size\_per\_gpu}${=}8$, $P_{\max}{=}8192$, $R_{\max}{=}2048$. Timing is mean over 29 steps. We compare three configurations: DualKV (SP=1), no sequence parallelism; FA2 (SP=2), which OOMs during the first policy update; and FA2 (SP=4), the minimum SP that avoids OOM.

\textbf{Results.} Table~\ref{tab:e2e-moe-multinode} summarizes the three configurations. DualKV delivers a $3.82\times$ policy-update speedup and $3.38\times$ total step speedup over FA2 SP=4, without requiring sequence parallelism. Both configs track identical training accuracy curves (Figure~\ref{fig:moe-accuracy}), confirming DualKV introduces no convergence degradation at multi-node training scale (formal proof in Appendix~\ref{app:pipeline-invariance}).

\begin{table}[ht]
\centering
\scriptsize
\setlength{\tabcolsep}{4pt}
\caption{Wall-clock values are means over 29 steps; parenthesized values are speedups relative to FA2 SP=4.}
\label{tab:e2e-moe-multinode}
\begin{tabular}{@{}lc c c c c c c c c@{}}
\toprule
\textbf{Config} & \textbf{mb/GPU} & \textbf{SP} & \textbf{Peak Mem} & \textbf{Policy Update} & \textbf{old\_log\_prob} & \textbf{ref\_log\_prob} & \textbf{Rollout} & \textbf{Step} & \textbf{Val Acc} \\
 & & & \textbf{(GB)} & \textbf{(s)} & \textbf{(s)} & \textbf{(s)} & \textbf{(s)} & \textbf{(s)} & \textbf{(step 28)} \\
\midrule
DualKV  & 8 & 1 & 103\rlap{$^\dagger$} & 1078 (3.82$\times$) & 129 (3.45$\times$) & 142 (3.58$\times$) & 215 (1.00$\times$) & 1564 (3.38$\times$) & 0.774 \\
FA2     & 8 & 4 &  92\rlap{$^\dagger$} & 4115 (1.00$\times$) & 447 (1.00$\times$) & 507 (1.00$\times$) & 214 (1.00$\times$) & 5284 (1.00$\times$) & 0.802 \\
FA2     & 8 & 2 & \multicolumn{7}{c}{OOM during first \texttt{loss.backward()}} \\
\bottomrule
\end{tabular}
\vspace{2pt}

{\scriptsize $^\dagger$PyTorch allocator reservation; actual HBM use stays within 80\,GB.}
\end{table}

\textbf{Speedups at multi-node MoE scale.}
DualKV delivers a $3.82\times$ policy-update speedup and $3.38\times$ total-step speedup (5284\,s $\to$ 1564\,s, throughput 293 $\to$ 991 tokens/s). The larger gains versus the 8B single-node range ($1.63$--$2.09\times$) arise because (i) MoE expert GEMMs amplify the per-token savings from DualKV's $(N{-}1) \cdot P$ reduction, and (ii) DualKV eliminates the all-to-all communication that FA2 with SP=4 pays on every attention call. Over 29 steps, DualKV completes in 12.6 hours vs.\ 42.6 hours for FA2 SP=4 on the same 16-GPU cluster (${\sim}\$6$K saved at AWS \texttt{p5.48xlarge} on-demand list price). Figure~\ref{fig:moe-multinode-breakdown} (Appendix~\ref{app:additional-figures}) visualizes the phase-wise breakdown.

\textbf{Eliminating sequence parallelism.} DualKV's memory reduction enables training at SP=1, whereas FA2 OOMs at SP=2 and requires SP=4. This translates memory savings into additional speedup: DualKV avoids the per-call all-to-all communication that SP imposes, which is why end-to-end gains ($3.38\times$) exceed what token dedup alone would predict.

\subsection{Memory Scaling with Context Length}
\label{sec:memory-scaling}

To characterize how DualKV's advantage scales with prompt length, we sweep $P \in \{8\text{K}\text{--}96\text{K}\}$ and $\text{mb} \in \{4, 8, 16\}$ on Llama-3.1-8B (16$\times$H100). This experiment also validates that DualKV generalizes across model families (Llama here versus Qwen3 in Sections~\ref{sec:e2e-longreason}--\ref{sec:e2e-moe-multinode}).

\begin{wrapfigure}
{r}{0.55\textwidth}
\vspace{-1.5em}
\centering
\includegraphics[width=\linewidth]{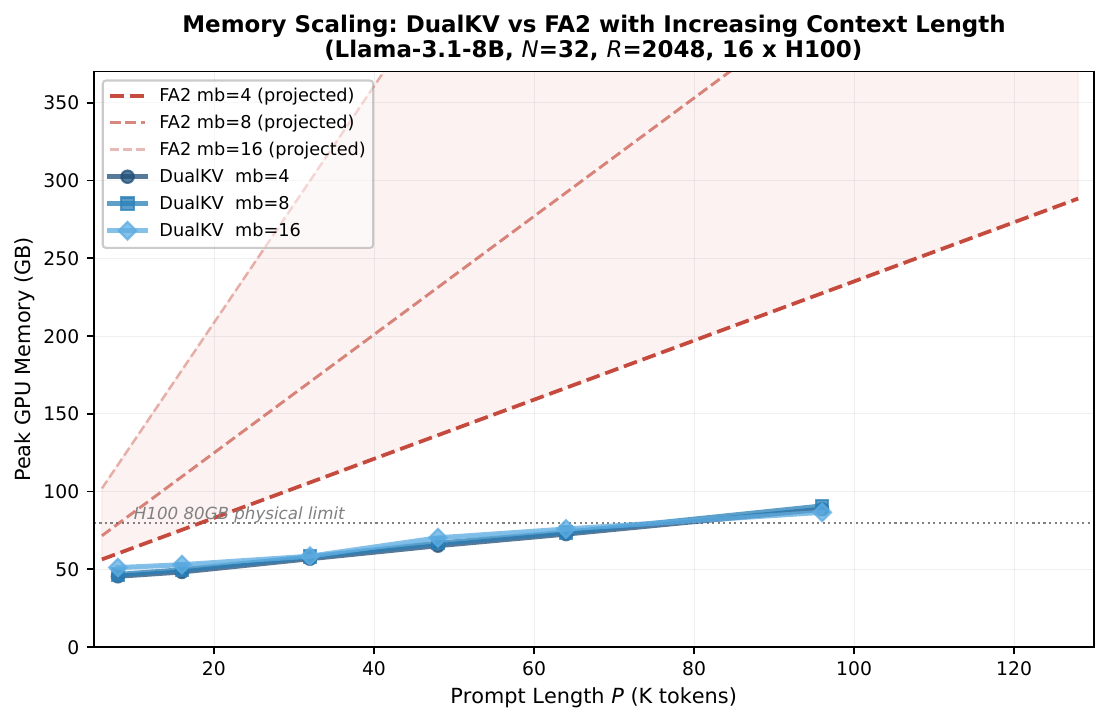}
\caption{Memory scaling: DualKV vs.\ FA2.}
\label{fig:memory-scaling-law-main}
\vspace{-1em}
\end{wrapfigure}
\textbf{Memory decouples from micro-batch size.} DualKV's memory follows $M_0 + c_P \cdot P + c_{\text{mb}R} \cdot \text{mb} \cdot R$: the prompt cost is paid once regardless of micro-batch size, so the three mb curves cluster tightly (Figure~\ref{fig:memory-scaling-law-main}). FA2's memory scales as $M_0 + c \cdot \text{mb} \cdot (P{+}R)$, diverging rapidly --- at $P{=}96\text{K}$, increasing mb from 4 to 16 adds only ${\sim}4$\,GB under DualKV ($<5\%$), whereas FA2 would require $225 \to 775$\,GB --- nearly $10\times$ the physical capacity of an H100. Practitioners can freely increase mb to maximize throughput without memory penalty in the long-prompt regime.

\subsection{DualKV with Sequence Parallelism on Gemma-4-31B}
\label{sec:dualkv-sp}

To show DualKV keeps pace with modern model architectures, we validate it end-to-end on Gemma-4-31B --- a
hybrid stack of interleaved head-dim-512 global and head-dim-256 sliding-window layers that standard FA2
cannot support --- exercising both the hd512/SWA kernel (Section~\ref{sec:kernel}, Appendix~\ref{app:hdim-swa})
and the SP composition (Section~\ref{sec:dualkv-sp-method}). Runs use LongReason (thinking enabled), $N{=}16$ rollouts, three
epochs, on a single 8$\times$H200 node; SP shards the actor and reference forward/backward.
Table~\ref{tab:dualkv-sp-gemma4} reports the configurations we ran. First-token log-probs
agree with the SP=1 computation to max$|\Delta|{=}0$ across SP${\in}\{1,2,4,8\}$, and held-out accuracy
remained stable with no degradation across all four configurations.

At 16K context, SP=4 lowers per-GPU peak memory from $96.8$ to $77.0$\,GB while adding ${\sim}12\%$ step
time --- the expected cost of the two per-layer all-to-alls, spread across the three DualKV-accelerated
training phases. At 32K context, where the deduplicated sequence is larger, SP=4 keeps peak memory flat at
$79.8$\,GB (well within the 140\,GB device) and completes full three-epoch training, demonstrating that
DualKV composes with SP to reach contexts beyond a single rank's capacity while retaining the shared-prompt
memory savings. At 64K with SP=8, peak memory stays flat at $78.9$\,GB --- comparable to the 16K and 32K runs ---
since SP shards the sequence so the per-rank workload stays bounded as context grows; larger SP degrees follow
by the same construction. DualKV's DAPO support (Section~\ref{sec:e2e-longreason}) and Gemma-4
support are orthogonal --- both leave the shared-prompt attention and packing path unchanged --- so they
compose without additional kernel work. The hd512/SWA kernels, the SP composition, and the Gemma-4 run
scripts are released on the \texttt{gemma4-dev} branch of the open-source
repository:\ \textcolor{blue}{\url{https://github.com/amazon-science/dualkv-flash-attn-for-rl/tree/gemma4-dev}}.

\begin{table}[ht]
\centering
\small
\setlength{\tabcolsep}{5pt}
\caption{DualKV with Ulysses SP on Gemma-4-31B (LongReason, $N{=}16$, thinking enabled, three
epochs; single 8$\times$H200 node). Wall-clock values are per-step means; peak memory is
\texttt{max\_memory\_allocated}. SP=1 and SP=4 at 16K are separate runs of identical config (no offload,
\texttt{gpu\_mem}${=}0.4$). Rollout generation (vLLM, SP-independent) is the remainder to the step total
and is omitted. Held-out accuracy was stable with no degradation in all rows.}
\label{tab:dualkv-sp-gemma4}
\begin{tabular}{@{}l c c c c c c@{}}
\toprule
\textbf{Config} & \textbf{Context} & \textbf{SP} & \textbf{Peak Mem (GB)}
  & \textbf{old\_log\_prob (s)} & \textbf{ref\_log\_prob (s)} & \textbf{Policy Update (s)} \\
\midrule
DualKV & 16K & 1 & 96.8
  & 140 & 140 & 549 \\
DualKV & 16K & 4 & 77.0
  & 157 & 178 & 638 \\
DualKV & 32K & 4 & 79.8
  & 315 & 319 & 1242 \\
DualKV & 64K & 8 & 78.9
  & 782 & 791 & 3073 \\
\bottomrule
\end{tabular}
\end{table}

\section{Related Work}

\textbf{Efficient attention for long sequences.} FlashAttention~\citep{dao2022flashattention,dao2023flashattention2,shah2024flashattention3} reduces attention memory from $O(S^2)$ to $O(S)$ via tiling and online softmax; Ring attention~\citep{liu2023ringattention}, Ulysses~\citep{jacobs2023ulysses}, Megatron-SP~\citep{korthikanti2023reducing}, and Context Parallelism~\citep{dubey2024llama3} distribute sequences across GPUs via sequence parallelism. All are orthogonal to DualKV's within-GPU shared-prompt deduplication and compose with it.

\textbf{Shared-prompt optimization.} Paged attention~\citep{kwon2023vllm}, RadixAttention~\citep{zheng2023sglang}, and bifurcated attention~\citep{athiwaratkun2024bifurcated} share prompt KV for inference only. Prefix Grouper~\citep{liu2025prefixgrouper} is a framework-level data reorganization that applies the shared-prompt decomposition to RL policy updates but replicates $K, V$ $N$ times before the attention call, retaining $O(N \cdot P \cdot d)$ memory at the kernel level. DualKV contributes the kernel-level primitive that eliminates this replication: a FlashAttention variant that reads shared KV from a single physical buffer and accumulates gradients from $N$ concurrent sequences via fp32 atomic writes, fully realizing the decomposition's memory and compute savings for training (Appendix~\ref{app:prefix-grouper-comparison} shows DualKV is 2--4$\times$ faster and uses 3--5$\times$ less memory than Prefix Grouper).

\textbf{RL training systems.} veRL~\citep{sheng2024hybridflow}, OpenRLHF~\citep{hu2024openrlhf}, and TRL~\citep{vonwerra2020trl} implement PPO~\citep{schulman2017ppo}/GRPO~\citep{shao2024deepseekmath}/REINFORCE++~\citep{hu2025reinforceplusplus}/DAPO~\citep{yu2025dapo} policy updates by packing $N$ sequences with replicated prompts, without exploiting shared-prompt structure. DualKV is a drop-in replacement for the attention layer in these systems, requiring only data pipeline changes to group same-prompt responses.


\section{Conclusion}

DualKV eliminates the redundant prompt replication in RL policy updates by introducing a two-region FlashAttention
kernel that reads shared KV from a single physical buffer, paired with a data-pipeline redesign that extends the token
reduction to the entire model with provable gradient equivalence. Across dense and MoE models, DualKV achieves
$2$--$3.8\times$ policy-update speedup, raises MFU from $36\%$ to $76\%$, and greatly reduces the need for sequence
parallelism at long contexts. DualKV shifts the RL training bottleneck from the policy update to rollout generation. The
current design assumes a single shared prefix per micro-batch group and benefits scale with $P/R$. Generalizing the
kernel to arbitrary tree structures with shared sub-graphs (e.g., multi-turn conversations with partially overlapping
prefixes or tree-of-thought search) would lift the single-prefix constraint. Because DualKV and sequence
parallelism are orthogonal, DualKV's shared-prompt compression delays --- and in many cases eliminates --- the
need for SP; for large models at ultra-long context, we compose DualKV with Ulysses SP and
validate the combination on Gemma-4-31B at 64K context (Section~\ref{sec:dualkv-sp}), extending DualKV to modern
hybrid attention architectures. Remaining future directions include porting to FA3/Hopper and FA4/Blackwell backends.
Our implementation is open-sourced at
\textcolor{blue}{\url{https://github.com/amazon-science/dualkv-flash-attn-for-rl}}.

\bibliographystyle{plainnat}
\bibliography{refs}

\appendix

\section{DualKV Formal Derivation}
\label{app:dualkv-math}

This appendix provides a self-contained mathematical derivation of the DualKV forward and backward passes and proves equivalence to the FA2 baseline.

\subsection{Setup and Notation}

Consider $N$ sequences sharing a prompt of $P$ tokens. Sequence $i$ has $R_i$ response tokens, with total length $S_i = P + R_i$. Let $T = P + \sum_{i=1}^N R_i$ denote the total packed token count.

In a standard transformer layer, let $h \in \mathbb{R}^{T \times D}$ denote the hidden states after the input norm. The QKV projection produces:
\begin{equation}
  Q = h W_Q, \quad K = h W_K, \quad V = h W_V
\end{equation}
where $W_Q \in \mathbb{R}^{D \times Hd}$, $W_K \in \mathbb{R}^{D \times H_k d}$, $W_V \in \mathbb{R}^{D \times H_k d}$, with $H$ query heads, $H_k$ KV heads, and head dimension $d$.

\subsection{Prompt Hidden State Invariant}
\label{app:prompt-invariant}

\begin{claim}[Prompt Hidden State Invariant]
In a decoder-only model, for any prompt position $j < P$, the hidden state $h_j$ is identical across all $N$ sequences at every layer.
\end{claim}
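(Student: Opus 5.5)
The plan is induction on the layer index $\ell = 0, 1, \ldots, L$, with the inductive hypothesis strengthened to cover the \emph{whole prefix}. Write $h^{(i,\ell)}_j$ for the hidden state of sequence $i$ at position $j$ after layer $\ell$. The hypothesis $\mathcal{H}_\ell$ is: for all $i, i'$ and all $j < P$, $h^{(i,\ell)}_j = h^{(i',\ell)}_j$.

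The strengthening matters because attention at position $j$ reads positions $0,\ldots,j$. So a pointwise statement about a single $j$ is not closed under the induction, whereas the statement for all $j<P$ is.

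\textbf{Base case ($\ell=0$).} The embedding layer maps token $t_j$ to $E[t_j]$. If the model uses absolute positional embeddings, it also adds a term depending only on $j$. All $N$ sequences share the same prompt tokens $t_0,\ldots,t_{P-1}$ at the same positions $0,\ldots,P-1$, so $\mathcal{H}_0$ holds. RoPE is applied inside attention and depends only on the absolute position, which is likewise shared.

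\textbf{Inductive step ($\mathcal{H}_\ell \Rightarrow \mathcal{H}_{\ell+1}$).} I decompose the layer into per-token maps and causal attention.

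Every per-token operation depends only on $h^{(i,\ell)}_j$, the shared parameters $\theta$, and possibly $j$. This covers RMSNorm/LayerNorm, the $Q,K,V$ projections, RoPE, the output projection, the MLP (including MoE routing, which is per-token), and the residual additions. By $\mathcal{H}_\ell$, these operations yield identical $q_j, k_j, v_j$ across $i$ for every $j<P$.

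For attention, causal masking gives
\[
o^{(i)}_j = \sum_{m \le j} \operatorname{softmax}_m\!\left(q^{(i)}_j \cdot k^{(i)}_m/\sqrt{d}\right) v^{(i)}_m .
\]
For $j<P$, every index $m \le j$ satisfies $m<P$, so every argument is sequence-independent. The masked entries $m>j$, which include all response positions, contribute $-\infty$ logits and so carry zero weight. This holds regardless of their values, so the differing responses cannot leak in.

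The same argument covers GQA, since head grouping is a fixed reindexing. It also covers sliding-window layers, since the window is a subset of $m\le j$. Composing the per-token maps with the attention step gives $\mathcal{H}_{\ell+1}$, and induction to $\ell=L$ yields the claim.

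\textbf{Main obstacle.} The delicate step is making precise that \emph{nothing} in a layer couples positions other than causal attention. Operations that would break the claim include batch-level normalization, non-causal token mixing, MoE capacity/token-dropping that depends on the whole batch, and dropout with sequence-dependent randomness. I would state explicitly the standing assumptions under which the claim holds: the model is evaluated deterministically (no dropout, as in the log-prob and policy passes), normalizations are per-token, and MoE routing is per-token without capacity-based dropping.

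A second point to check is position indexing. In both the replicated packing (Eq.~\ref{eq:std-packing}) and the DualKV packing (Eq.~\ref{eq:dualkv-packing}), the prompt occupies logical positions $0,\ldots,P-1$ within each sequence, because \texttt{cu\_seqlens} resets positions. Hence RoPE phases agree across sequences.

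Finally, padding tokens never appear before position $P$ in a sequence, assuming right padding. Under left padding I would note that positions must be defined relative to the first real token for the claim to apply.
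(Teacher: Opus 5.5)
Your proposal is correct and follows essentially the same route as the paper. Both argue by induction on layers, with the invariant quantified over all prompt positions $j<P$: per-token operations (norms, projections, RoPE, MLP, residual) preserve it, and causal masking restricts each prompt query to the shared prompt keys $\{0,\ldots,j\}$. Your extra remarks on standing assumptions (deterministic evaluation, per-token normalization, MoE routing without capacity-based dropping, shared position indexing) make explicit hypotheses the paper leaves implicit, but they do not change the argument.
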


\begin{proof}
By induction on layers. At the embedding layer, prompt tokens are shared, so $h_j^{(i)} = h_j$ for all $i$. Suppose the invariant holds at the input to layer $\ell$. Every operation in the layer preserves it:
\begin{itemize}
  \item \textbf{RMSNorm}: per-token, so $\text{Norm}(h_j^{(i)}) = \text{Norm}(h_j)$ for all $i$.
  \item \textbf{QKV projection}: linear per-token, so $Q_j^{(i)} = Q_j$, $K_j^{(i)} = K_j$, $V_j^{(i)} = V_j$.
  \item \textbf{RoPE}: depends only on position index $j$, applied per-token.
  \item \textbf{Causal attention}: for $j < P$, the causal mask restricts attention to keys at positions $\{0, \ldots, j\}$, all prompt positions with identical $K$ and $V$. Combined with $Q_j^{(i)} = Q_j$ from the QKV-projection step, the attention output satisfies $O_j^{(i)} = O_j$ for all $i$.
  \item \textbf{Output projection, residual, MLP}: all per-token or element-wise.
\end{itemize}
The invariant holds at the output of layer $\ell$, completing the induction.
\end{proof}

\subsection{Forward Pass}

\paragraph{Packing.} We represent the micro-batch as a single packed sequence:
\begin{equation}
  \mathbf{x} = [\underbrace{p_0, \ldots, p_{P-1}}_{\text{prompt}},\; \underbrace{r_0^{(1)}, \ldots, r_{R_1-1}^{(1)}}_{\text{response 1}},\; \ldots,\; \underbrace{r_0^{(N)}, \ldots, r_{R_N-1}^{(N)}}_{\text{response } N}]
\end{equation}

All per-token operations process $T = P + \sum_i R_i$ tokens, saving $(N{-}1) \cdot P$ tokens versus the standard $\sum_i S_i = NP + \sum_i R_i$ layout.

\paragraph{QKV split.} After the QKV projection on $T$ tokens, we split at the prompt boundary $P$ (zero-copy tensor views):
\begin{align}
  Q_c &= Q_{[0:P]}, \quad K_c = K_{[0:P]}, \quad V_c = V_{[0:P]} \label{eq:ctx-split} \\
  Q_d &= Q_{[P:T]}, \quad K_d = K_{[P:T]}, \quad V_d = V_{[P:T]} \label{eq:dec-split}
\end{align}

\paragraph{Call 1: Context self-attention.} Standard causal attention over the $P$ prompt tokens, computed once:
\begin{equation}
  O_c = \text{CausalAttn}(Q_c, K_c, V_c) \in \mathbb{R}^{P \times H \times d}
  \label{eq:call1}
\end{equation}

\paragraph{Call 2: Decoded attention (DualKV kernel).} For sequence $i$ and decoded position $r \in \{0, \ldots, R_i - 1\}$, the query attends to the full key-value sequence $[K_c;\, K_d^{(i)}]$ with causal masking at logical position $P + r$:
\begin{equation}
  O_d^{(i)}[r] = \sum_{j=0}^{P + r} \alpha_j^{(i,r)} \cdot v_j
  \label{eq:call2}
\end{equation}
where $v_j$ indexes into $[V_c;\, V_d^{(i)}]$ and the attention weights are:
\begin{equation}
  \alpha_j^{(i,r)} = \frac{\exp\!\big(q_r^{(i)} \cdot k_j / \sqrt{d}\big)}{\sum_{j'=0}^{P+r} \exp\!\big(q_r^{(i)} \cdot k_{j'} / \sqrt{d}\big)}
\end{equation}

The \texttt{context\_seqlen}${}= P$ parameter shifts the kernel's internal position counter so that decoded query at kernel index $r$ maps to logical position $P + r$, ensuring:
\begin{itemize}
  \item All $P$ context positions are unmasked ($j < P \leq P + r$).
  \item Decoded positions are causally masked ($j \leq P + r$).
\end{itemize}

\paragraph{Reassembly.} The final attention output is:
\begin{equation}
  O = [O_c;\, O_d] \in \mathbb{R}^{T \times H \times d}
\end{equation}
which flows through $W_O$, residual addition, and MLP on $T$ tokens.

\subsection{Backward Pass}

Let $\mathcal{L}$ denote the training loss (computed only over response tokens). We receive $\partial \mathcal{L}/\partial O \in \mathbb{R}^{T \times H \times d}$ and split it:
\begin{equation}
  \frac{\partial \mathcal{L}}{\partial O_c} = \left(\frac{\partial \mathcal{L}}{\partial O}\right)_{\![0:P]}, \quad
  \frac{\partial \mathcal{L}}{\partial O_d} = \left(\frac{\partial \mathcal{L}}{\partial O}\right)_{\![P:T]}
\end{equation}

Note that $\partial \mathcal{L}/\partial O_c \neq 0$ in general: even though the loss is on response tokens only, gradients propagate back to prompt positions through subsequent layers' residual connections.

\paragraph{Call 2 backward (DualKV kernel).} Given $\partial \mathcal{L}/\partial O_d$ and the saved tensors from the forward pass, the backward produces:
\begin{align}
  \frac{\partial \mathcal{L}}{\partial Q_d},\quad
  \frac{\partial \mathcal{L}}{\partial K_d},\quad
  \frac{\partial \mathcal{L}}{\partial V_d}
\end{align}
and gradients with respect to the shared context KV:
\begin{align}
  \left(\frac{\partial \mathcal{L}}{\partial K_c}\right)_{\!d} &= \sum_{i=1}^{N} \sum_{r=0}^{R_i - 1} \sum_{h} \frac{\partial \mathcal{L}}{\partial O_d^{(i)}[r,h]} \cdot \frac{\partial O_d^{(i)}[r,h]}{\partial K_c} \label{eq:dkc-dec} \\
  \left(\frac{\partial \mathcal{L}}{\partial V_c}\right)_{\!d} &= \sum_{i=1}^{N} \sum_{r=0}^{R_i - 1} \sum_{h} \frac{\partial \mathcal{L}}{\partial O_d^{(i)}[r,h]} \cdot \frac{\partial O_d^{(i)}[r,h]}{\partial V_c} \label{eq:dvc-dec}
\end{align}

\paragraph{Implementation: fp32 gradient accumulation.}
The summation over $N$ sequences in Eq.~\ref{eq:dkc-dec}--\ref{eq:dvc-dec} introduces a concurrent-write pattern absent from standard FlashAttention: the single shared $dK_c$ and $dV_c$ buffer has no batch offset, yet $N$ thread blocks (one per decoded sequence) concurrently compute partial contributions to the same context positions. DualKV resolves this with fp32 atomic accumulation in a dedicated accumulator buffer, followed by a cast kernel that converts the accumulator to bf16. The kernel-level description and pseudocode are given in Section~\ref{sec:kernel-backward} (Algorithms~\ref{alg:dualkv-backward}--\ref{alg:dualkv-bwd-convert}). For the purposes of the mathematical derivation that follows, the relevant property is that the kernel produces $(\partial\mathcal{L}/\partial K_c)_d = \sum_i \sum_r \sum_h \partial\mathcal{L}/\partial O_d^{(i)}[r,h] \cdot \partial O_d^{(i)}[r,h]/\partial K_c$ exactly (up to fp32 rounding) as stated in Eq.~\ref{eq:dkc-dec}--\ref{eq:dvc-dec}.

\paragraph{Call 1 backward (standard FA2).} Given $\partial \mathcal{L}/\partial O_c$, produces:
\begin{align}
  \frac{\partial \mathcal{L}}{\partial Q_c},\quad
  \left(\frac{\partial \mathcal{L}}{\partial K_c}\right)_{\!c},\quad
  \left(\frac{\partial \mathcal{L}}{\partial V_c}\right)_{\!c}
\end{align}

\paragraph{Gradient accumulation.} Since $K_c$ and $V_c$ are the \emph{same tensor objects} passed to both Call~1 and Call~2, PyTorch autograd accumulates their gradients:
\begin{align}
  \frac{\partial \mathcal{L}}{\partial K_c} &= \left(\frac{\partial \mathcal{L}}{\partial K_c}\right)_{\!c} + \left(\frac{\partial \mathcal{L}}{\partial K_c}\right)_{\!d} \label{eq:dkc-total} \\
  \frac{\partial \mathcal{L}}{\partial V_c} &= \left(\frac{\partial \mathcal{L}}{\partial V_c}\right)_{\!c} + \left(\frac{\partial \mathcal{L}}{\partial V_c}\right)_{\!d} \label{eq:dvc-total}
\end{align}

The full QKV gradients are then reassembled:
\begin{equation}
  \frac{\partial \mathcal{L}}{\partial Q} = \left[\frac{\partial \mathcal{L}}{\partial Q_c};\, \frac{\partial \mathcal{L}}{\partial Q_d}\right], \quad
  \frac{\partial \mathcal{L}}{\partial K} = \left[\frac{\partial \mathcal{L}}{\partial K_c};\, \frac{\partial \mathcal{L}}{\partial K_d}\right], \quad
  \frac{\partial \mathcal{L}}{\partial V} = \left[\frac{\partial \mathcal{L}}{\partial V_c};\, \frac{\partial \mathcal{L}}{\partial V_d}\right]
\end{equation}

These propagate through the QKV projection backward to produce $\partial \mathcal{L}/\partial h$ and $\partial \mathcal{L}/\partial W_Q$, $\partial \mathcal{L}/\partial W_K$, $\partial \mathcal{L}/\partial W_V$.

\subsection{Equivalence to the FA2 Baseline}
\label{sec:equiv-theorem}

\begin{theorem}[Gradient Equivalence]
The parameter gradients produced by DualKV are identical to those produced by the FA2 baseline.
\end{theorem}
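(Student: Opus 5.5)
The plan is to view both the FA2 baseline and DualKV as computing the same scalar loss $\mathcal{L}(\theta)$ through two computational graphs. I will show that the DualKV graph is obtained from the baseline graph by merging nodes that carry identical values. The parameter gradients then agree by the multivariate chain rule.

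First I set up the correspondence. For each layer $\ell$ and each sequence $i$, the baseline has prompt hidden states $h_j^{(\ell,i)}$ for $j<P$, while DualKV keeps a single copy $h_j^{(\ell)}$. By the Prompt Hidden State Invariant claim, $h_j^{(\ell,i)}=h_j^{(\ell)}$ for all $i$. The response states $h^{(\ell,i)}_{P+r}$ correspond one-to-one. I will then check by induction on layers that every forward value matches:
\begin{itemize}
  \item per-token operations match trivially;
  \item the prompt rows of attention equal Call~1 (Eq.~\ref{eq:prompt-subblock}, Eq.~\ref{eq:call1});
  \item the response rows equal Call~2 (Eq.~\ref{eq:response-subblock}, Eq.~\ref{eq:call2}), since the shifted causal mask reproduces $j\le P+r$ over $[K_c;K_d^{(i)}]$.
\end{itemize}
Since the loss reads only response positions plus the last-prompt logit row for the first response token, and this row is shared and identical across sequences, $\mathcal{L}$ takes the same value under both graphs.

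Second, the key step is a general "tied-node" lemma. Consider a differentiable graph in which $N$ copies $x^{(1)},\dots,x^{(N)}$ of a node are each computed by identical subgraphs from the same parameters, and each copy feeds different downstream consumers. Replace them by a single node $x$ that feeds all of those consumers. For every parameter, the total derivative of the merged graph equals that of the original graph, and $\partial\mathcal{L}/\partial x=\sum_i \partial\mathcal{L}/\partial x^{(i)}$. The justification is that the merged function is the original function composed with the diagonal map $x\mapsto(x,\dots,x)$, evaluated where all copies coincide. I apply this lemma with the prompt nodes ($K_P$, $V_P$, hidden states, and so on) as the tied nodes. The upstream gradient then splits into the Call~1 term and the $N$ per-sequence Call~2 terms, which is precisely Eq.~\ref{eq:dkp-accumulation} and Eqs.~\ref{eq:dkc-total}--\ref{eq:dvc-total}. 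In the baseline, the prompt-query rows of sequence $i$ receive only their own gradient. Summed over $i$, these give $N$ copies of the same Call~1 backward computation on inputs whose upstream gradients sum to DualKV's $\partial\mathcal{L}/\partial O_c$, and linearity of the backward in $\partial\mathcal{L}/\partial O$ reconciles the two.

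Third, I propagate this through the weight gradients. Any weight gradient, e.g. $\partial\mathcal{L}/\partial W_Q=\sum_{\text{tokens}} h_t^\top\, \partial\mathcal{L}/\partial Q_t$, is a sum over tokens of a (shared-value) input times a token gradient. Because each shared token's gradient in DualKV equals the sum of its $N$ baseline copies' gradients, the two sums coincide. I will do this as a backward induction from the last layer down to the embedding layer. At the embedding layer, gradients scatter-add into the same embedding rows.

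The main obstacle is stating precisely what "identical" means. In exact arithmetic the argument above is complete. In floating point, however, summation order differs: fp32 atomics across $N$ blocks versus separate per-copy accumulations. I would therefore state the theorem in exact arithmetic and append a remark that the implementation matches up to fp32 reassociation followed by a single bf16 cast, citing the two-stage accumulation of Section~\ref{sec:kernel-backward}. A secondary subtlety is that the induction must include the shared first-token logit row. The tied-node lemma handles it as well, because its gradient is the sum of the $N$ first-token log-prob contributions.
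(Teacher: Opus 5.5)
Your argument is correct, but its backbone differs from the paper's. The paper argues locally, for one projection at a time. It splits each baseline copy's gradient as $\partial\mathcal{L}/\partial K_c^{(i)} = g_{\text{self}}^{(i)} + g_{\text{cross}}^{(i)}$. It then uses the invariant to factor $h_c$ out of $\sum_i (\partial\mathcal{L}/\partial K_c^{(i)})^\top h_c$, and identifies the cross terms with what the kernel accumulates. This reduces the theorem to $\sum_i g_{\text{self}}^{(i)} = (\partial\mathcal{L}/\partial K_c)_c$. That identity follows from the aggregation identity $\partial\mathcal{L}/\partial O_c = \sum_i \partial\mathcal{L}/\partial O_c^{(i)}$ together with linearity of the attention backward in $dO$, with $V$ handled as ``the same structure''.

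You instead establish forward equivalence for every $\theta$ and then use the diagonal-map form of the chain rule. This yields, in one step, equality of \emph{all} parameter gradients and the adjoint identity $\partial\mathcal{L}/\partial x = \sum_i \partial\mathcal{L}/\partial x^{(i)}$ at every prompt node. It buys you genuine extra coverage:
\begin{itemize}
\item It covers every parameter ($W_Q$, $W_O$, MLP, norms, embeddings, LM head), not only $W_K$ and $W_V$.
\item It derives the aggregation identity rather than assuming it. The paper justifies that identity only by per-token linearity of the residual stream. But later layers' Call~1 and Call~2 also route gradient into prompt states, so a backward induction over layers is implicitly needed there.
\item It handles the shared last-prompt logit row explicitly.
\end{itemize}
Given your lemma, the linearity reconciliation in your second step and the per-weight backward induction in your third are consistency checks rather than load-bearing steps.

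The paper's route, in return, tracks the implementation directly: Call~1's autograd term plus the $N$ atomically accumulated Call~2 terms. It also pinpoints where linearity of the FA2 backward enters. Both arguments are exact-arithmetic statements. Both also take as given that the DualKV backward kernel computes the exact vector--Jacobian product of Call~2 (Eqs.~\ref{eq:dkc-dec}--\ref{eq:dvc-dec}). Your floating-point remark is the right caveat.
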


\begin{proof}
In the standard approach, each sequence $i \in \{1, \ldots, N\}$ has its own copy of the prompt KV: $K_c^{(i)}, V_c^{(i)}$. The gradient of $K_c^{(i)}$ has two components:
\begin{equation}
  \frac{\partial \mathcal{L}}{\partial K_c^{(i)}} = \underbrace{g_{\text{self}}^{(i)}}_{\text{prompt self-attn}} + \underbrace{g_{\text{cross}}^{(i)}}_{\text{decoded} \to \text{context}}
\end{equation}

The parameter gradient from the $W_K$ projection at prompt positions is:
\begin{equation}
  \frac{\partial \mathcal{L}}{\partial W_K}\bigg|_{\text{prompt}} = \sum_{i=1}^{N} \left(\frac{\partial \mathcal{L}}{\partial K_c^{(i)}}\right)^\top h_c = \left(\sum_{i=1}^{N} \frac{\partial \mathcal{L}}{\partial K_c^{(i)}}\right)^\top h_c
  \label{eq:dwk-standard}
\end{equation}
where $h_c$ is the prompt hidden state (identical across $i$ by the invariant, so it factors out of the sum).

In DualKV, the single $K_c$ receives gradient (Eq.~\ref{eq:dkc-total}):
\begin{equation}
  \frac{\partial \mathcal{L}}{\partial K_c} = \left(\frac{\partial \mathcal{L}}{\partial K_c}\right)_{\!c} + \sum_{i=1}^{N} g_{\text{cross}}^{(i)}
\end{equation}

The parameter gradient is:
\begin{equation}
  \frac{\partial \mathcal{L}}{\partial W_K}\bigg|_{\text{prompt}}^{\text{DualKV}} = \left(\frac{\partial \mathcal{L}}{\partial K_c}\right)^\top h_c
  \label{eq:dwk-dualkv}
\end{equation}

It remains to show that Eq.~\ref{eq:dwk-standard} equals Eq.~\ref{eq:dwk-dualkv}, i.e.:
\begin{equation}
  \sum_{i=1}^{N} \left(g_{\text{self}}^{(i)} + g_{\text{cross}}^{(i)}\right) = \left(\frac{\partial \mathcal{L}}{\partial K_c}\right)_{\!c} + \sum_{i=1}^{N} g_{\text{cross}}^{(i)}
  \label{eq:to-show}
\end{equation}

This reduces to showing:
\begin{equation}
  \sum_{i=1}^{N} g_{\text{self}}^{(i)} = \left(\frac{\partial \mathcal{L}}{\partial K_c}\right)_{\!c}
  \label{eq:self-equiv}
\end{equation}

In the standard approach, $g_{\text{self}}^{(i)}$ is the gradient of $K_c^{(i)}$ from the context self-attention of sequence $i$. This depends on $\partial \mathcal{L}/\partial O_c^{(i)}$, which propagates back from subsequent layers. While the forward values at prompt positions are identical across $i$, the backward gradients $\partial \mathcal{L}/\partial O_c^{(i)}$ generally differ (each sequence $i$ has different response tokens producing different loss contributions).

In DualKV, the single prompt copy receives the \emph{aggregated} upstream gradient:
\begin{equation}
  \frac{\partial \mathcal{L}}{\partial O_c} = \sum_{i=1}^{N} \frac{\partial \mathcal{L}}{\partial O_c^{(i)}}
\end{equation}

This aggregation happens naturally through the residual stream: in the packed layout, prompt hidden states at layer $\ell{+}1$ receive gradients from all $N$ response branches propagating backward through subsequent layers. Since all intervening operations (norms, projections, MLP, residual) are per-token and \emph{linear in the backward pass}, the gradient at the prompt positions is exactly the sum over all $N$ sequences' contributions.

The context self-attention backward is linear in $\partial \mathcal{L}/\partial O_c$:
\begin{equation}
  \left(\frac{\partial \mathcal{L}}{\partial K_c}\right)_{\!c} = f_{\text{attn-bwd}}\!\left(\frac{\partial \mathcal{L}}{\partial O_c}\right) = f_{\text{attn-bwd}}\!\left(\sum_{i=1}^{N} \frac{\partial \mathcal{L}}{\partial O_c^{(i)}}\right) = \sum_{i=1}^{N} f_{\text{attn-bwd}}\!\left(\frac{\partial \mathcal{L}}{\partial O_c^{(i)}}\right) = \sum_{i=1}^{N} g_{\text{self}}^{(i)}
\end{equation}

where $f_{\text{attn-bwd}}$ denotes the attention backward function mapping $dO$ to $dK$, which is linear in $dO$ (since the forward values $Q_c, K_c, V_c$ are fixed during backward). This establishes Eq.~\ref{eq:self-equiv} and completes the proof.
\end{proof}

\subsection{Complexity Summary}

This subsection summarizes the per-component FLOPs and memory scaling of FA2 baseline vs.\ DualKV in asymptotic notation, abstracting the concrete numbers of Section~\ref{sec:training}. The table shows that the savings are dominated by the token-count reduction $T_{\text{std}} \to T_{\text{dk}}$: every per-token cost (norms, projections, MLP, KV storage, activation memory) shrinks by the ratio $\rho = T_{\text{std}}/T_{\text{dk}}$, and only attention has additional structural savings beyond that.

\begin{table}[ht]
\centering
\caption{Complexity comparison. $T_{\text{std}} = NP + \sum_i R_i$, $T_{\text{dk}} = P + \sum_i R_i$.}
\label{tab:complexity}
\begin{tabular}{@{}lll@{}}
\toprule
\textbf{Component} & \textbf{FA2 baseline} & \textbf{DualKV} \\
\midrule
Norms, projections, MLP & $O(T_{\text{std}} \cdot D^2)$ & $O(T_{\text{dk}} \cdot D^2)$ \\
Attention FLOPs & $\sum_i O(S_i^2 \cdot H \cdot d)$ & $O(P^2 Hd) + \sum_i O(R_i S_i Hd)$ \\
KV memory per layer & $O(T_{\text{std}} \cdot H_k \cdot d)$ & $O(T_{\text{dk}} \cdot H_k \cdot d)$ \\
Activation memory & $O(T_{\text{std}} \cdot D)$ & $O(T_{\text{dk}} \cdot D)$ \\
\bottomrule
\end{tabular}
\end{table}

When $P \gg R$ and $R_i = R$ for all $i$:
\begin{equation}
  \frac{T_{\text{std}}}{T_{\text{dk}}} = \frac{NP + NR}{P + NR} \xrightarrow{P \gg R} N
\end{equation}

The savings approach a factor of $N$ (the batch size) across all components --- norms, projections, MLP, attention, and activation memory --- yielding near-linear scaling.

\section{Training-Step Gradient Equivalence}
\label{app:pipeline-invariance}

Appendix~\ref{app:dualkv-math} proves the DualKV kernel produces attention gradients identical to the FA2 baseline in exact arithmetic for any fixed packed input. This appendix lifts that kernel-level invariance to the training-step level. DualKV introduces two modifications to veRL's policy-update data pipeline, both motivated by the kernel's same-prompt-per-micro-batch packing contract:
\begin{enumerate}
  \item \textbf{Skip \texttt{balance\_batch}.} veRL's default pipeline reorders the rollout list by per-sample sequence length to equalize per-GPU token counts before dispatch. DualKV skips this step, since the seqlen permutation scatters rollouts from the same prompt across different GPUs, breaking the single-prompt packing the kernel requires. The global rollout multiset is unchanged; only its ordering and per-GPU partition differ.
  \item \textbf{Force \texttt{shuffle=False}} within the per-rank mini-batch iterator. Under the default pipeline, samples within each rank are reshuffled between PPO epochs. DualKV disables this to preserve contiguous prompt-grouped ordering across epochs.
\end{enumerate}

For one training step, let $B$ be the global multiset of $(p,r,A)$ triples produced by that step's rollout, where $p$ is a prompt, $r$ a sampled response, and $A=A(p,r)$ a scalar advantage. The policy-gradient estimator is $\hat g(B,\theta) = \sum_{(p,r,A) \in B} f(p,r,A,\theta)$, covering standard RL objectives (PPO, GRPO, DAPO). Let $\mathcal{P}_{\text{def}}$ denote veRL's default pipeline and $\mathcal{P}_{\text{dk}}$ the DualKV pipeline.

\begin{theorem}[No systematic bias at training-step aggregate]
For any parameters $\theta$,
\begin{equation}
  \sum_{(p,r,A) \in B} f^{\mathcal{P}_{\text{def}}}(p,r,A,\theta) \;=\; \sum_{(p,r,A) \in B} f^{\mathcal{P}_{\text{dk}}}(p,r,A,\theta) \qquad \text{in exact arithmetic.}
\end{equation}
\end{theorem}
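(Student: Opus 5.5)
The argument has two parts. First, for a fixed $\theta$, each summand $f(p,r,A,\theta)$ is a deterministic function of the triple alone, whichever pipeline computes it. Second, the two pipelines sum the same multiset $B$ and differ only in the order and grouping of the summands; in exact arithmetic, addition is commutative and associative, so the totals agree.

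\textbf{Step 1: pipeline-independence of each summand.} I would fix a triple $(p,r,A)\in B$ and write $f(p,r,A,\theta)$ explicitly as the per-token objective summed over the response tokens of $r$: the clipped ratio term, with the advantage $A$ and the optional KL term. Every ingredient is a response-token log-probability $\log\pi_\theta(r_t\mid p,r_{<t})$, or the analogous quantity under $\pi_{\theta_{\text{old}}}$ or $\pi_{\text{ref}}$. The key claim is that these log-probabilities, and their gradients, coincide under $\mathcal{P}_{\text{def}}$ (packing as in Eq.~\ref{eq:std-packing}) and $\mathcal{P}_{\text{dk}}$ (packing as in Eq.~\ref{eq:dualkv-packing}). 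For forward values I would invoke the Prompt Hidden State Invariant claim: the single prompt copy carries the same hidden states as every replicated copy. The response rows of Call~2 then equal Eq.~\ref{eq:response-subblock} exactly, and Eq.~\ref{eq:call2} shows the context-shifted mask reproduces the baseline causal mask. Varlen packing with per-group \texttt{cu\_seqlens} ensures that no token attends across sequences or groups. For gradients I would invoke the Gradient Equivalence theorem, which gives identical parameter gradients for any fixed packed input. The first-token log-prob read from the shared row $P-1$ equals the baseline's last-prompt-position logit by the same invariant.

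\textbf{Step 2: grouping independence.} The point to verify here is that neither the micro-batch, the GPU, nor the mini-batch a sample lands in changes its contribution. Under both pipelines, veRL accumulates gradients by summing over micro-batches and all-reducing over ranks. I would check that the per-sample normalization is global rather than per-micro-batch: for example, the token-mean loss aggregation should divide by the global token count, and advantages should be computed before dispatch from the full group. Under that assumption, the step gradient is literally $\sum_{B} f$, summed in some order. Disabling shuffle and skipping \texttt{balance\_batch} only permutes the summands and repartitions them, so the totals are equal. If there are multiple PPO epochs or mini-batches within a step, the same argument applies to each mini-batch, provided both pipelines assign the same multiset to it. Otherwise the statement should be read as the aggregate over the step at a fixed $\theta$, which is exactly what is claimed.

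\textbf{Main obstacle.} I expect the main obstacle to be the normalization issue in Step 2. If any loss reduction divides by a micro-batch-local token count, the contribution of a sample depends on its co-located neighbours, and that changes when \texttt{balance\_batch} is skipped. I would first pin $f$ to veRL's global-token normalization. Failing that, I would restrict the theorem to sum- or sequence-level-normalized objectives and state that restriction explicitly. The rest is bookkeeping on top of the earlier claim and theorem.
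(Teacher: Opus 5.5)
Your proposal is correct and follows the paper's proof essentially step for step. Like the paper, you get termwise equality $f^{\mathcal{P}_{\text{def}}}=f^{\mathcal{P}_{\text{dk}}}$ from the kernel-level equivalence (the Prompt Hidden State Invariant plus the Gradient Equivalence theorem, with advantages fixed upstream), note that skipping \texttt{balance\_batch} and disabling shuffle only permute and repartition the same multiset $B$, and close with order-invariance of finite sums. Your normalization concern is legitimate, but the paper settles it by hypothesis rather than by checking veRL: it defines $\hat g(B,\theta)$ as a sum of per-sample terms $f(p,r,A,\theta)$ that depend only on the triple, which rules out any micro-batch-local token-mean normalization.
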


\begin{proof}
The per-sample gradient $f(p,r,A,\theta)$ depends on $\theta$, the advantage $A$, and the per-token log-probabilities $\pi_\theta(r \mid p)$. Advantages are computed upstream of the policy update and are pipeline-independent. The kernel-level equivalence theorem of Appendix~\ref{app:dualkv-math} establishes the primary lemma we rely on: log-probabilities and their gradients at response positions are identical under $\mathcal{P}_{\text{def}}$ and $\mathcal{P}_{\text{dk}}$ for any fixed sample. Hence $f^{\mathcal{P}_{\text{def}}}(p,r,A,\theta) = f^{\mathcal{P}_{\text{dk}}}(p,r,A,\theta)$ termwise. The two pipeline modifications are both permutations of the rollout list, so the global multiset $B$ is the same under both pipelines. The equation follows by termwise equality and order-invariance of finite sums.
\end{proof}

\section{Theoretical Speedup and Memory Analysis}
\label{app:theoretical-analysis}

This appendix derives per-layer kernel-level memory savings and the closed-form attention speedup ratio for DualKV. These are theoretical ceilings; measured end-to-end numbers are in Sections~\ref{sec:e2e-longreason}--\ref{sec:e2e-moe-multinode} (and Appendix~\ref{app:gsm8k-shortprompt}).

\subsection{Memory Savings}
\label{app:theoretical-memory}

Per layer, the DualKV kernel avoids storing $N{-}1$ redundant copies of three tensors that an equivalent $N$-copy FA2 call would materialize:

\begin{table}[ht]
\centering
\small
\caption{Kernel-level memory savings per layer relative to an $N$-copy FA2 call over the same packed batch. Configuration: Qwen3-8B ($N{=}8$, $P{=}16384$, $R{=}2048$, $H{=}32$, $H_k{=}8$, $d{=}128$) in bf16.}
\label{tab:kernel-mem}
\begin{tabular}{@{}lll@{}}
\toprule
\textbf{Tensor saved} & \textbf{Per-layer savings (bytes)} & \textbf{Numerical} \\
\midrule
$K_c, V_c$ (shared prompt KV)         & $2 \cdot (N{-}1) \cdot P \cdot H_k \cdot d \cdot 2$ & 469\,MB \\
$Q$ at prompt positions (avoided vs.\ FA2 baseline) & $(N{-}1) \cdot P \cdot H \cdot d \cdot 2$ & 940\,MB \\
\texttt{softmax\_lse} for prompt positions & $(N{-}1) \cdot P \cdot H \cdot 4$              & 14\,MB \\
\midrule
\textbf{Total per layer}              &                                                      & \textbf{1423\,MB} \\
\bottomrule
\end{tabular}
\end{table}

These are \emph{kernel-level} savings: they count the per-layer attention-input tensors ($Q$ at prompt positions, $K_c$, $V_c$) and the prompt-position \texttt{softmax\_lse} that an FA2 baseline would materialize $N$ times and DualKV materializes once. DualKV's single-prompt packing also reduces framework-level activation memory across every per-token operation (norms, QKV/output projections, MLP, residual stream), since these now process $P + NR$ tokens instead of $N(P+R)$.

\subsection{Speedup Scaling}
\label{app:theoretical-speedup}

The attention speedup ratio scales approximately as:
\begin{equation}
  \text{Speedup}_{\text{attn}} \approx \frac{N \cdot S^2}{P^2 + N \cdot R \cdot S} = \frac{N(P+R)^2}{P^2 + NR(P+R)}
\end{equation}

For $P \gg R$, this simplifies to $\approx N$, meaning DualKV approaches a linear speedup in batch size as prompt length dominates. For $P{=}32\text{K}$, $R{=}2\text{K}$, $N{=}16$: theoretical speedup is $6.1\times$, measured fwd+bwd is $5.48\times$ (90\% efficiency). The consistent speedup across forward and backward passes confirms that the theoretical FLOPs analysis (Section~\ref{sec:training}) applies equally to both directions.

The end-to-end training speedup is lower than this kernel-level ceiling, as attention is only one component of the training step (alongside generation, projections, MLP, optimizer, and communication). The token reduction ratio $\rho$ (Section~\ref{sec:training}) provides an upper bound on the full-model speedup.

\subsection{Token Reduction Ratio \texorpdfstring{$\rho$}{ρ}: Scaling Analysis and Regimes}
\label{app:rho-scaling}

The token reduction ratio of DualKV over standard attention is:
\begin{equation}
  \rho = \frac{T_{\text{std}}}{T_{\text{dk}}} = \frac{N(P + R)}{P + NR}
  \label{eq:rho}
\end{equation}

This ratio governs the speedup across \emph{all} model components (norms, projections, MLP, activation memory), not just attention. We analyze its behavior along the two key axes: rollout factor $N$ and prompt-to-response ratio $P/R$.

\paragraph{Scaling with $N$ (rollout factor).} Fixing $P$ and $R$, taking the limit:
\begin{equation}
  \lim_{N \to \infty} \rho = \lim_{N \to \infty} \frac{N(P+R)}{P + NR} = \frac{P + R}{R} = 1 + \frac{P}{R}
\end{equation}

For large $N$, the speedup saturates at $1 + P/R$. With $P{=}16\text{K}$, $R{=}2\text{K}$: ceiling is $9\times$. Increasing $N$ beyond this point yields diminishing returns --- the bottleneck shifts to decoded-token compute $O(NR)$ which scales linearly in both approaches.

\paragraph{Scaling with $P/R$ (prompt length ratio).} Fixing $N$ and $R$, as $P \to \infty$:
\begin{equation}
  \lim_{P \to \infty} \rho = \lim_{P \to \infty} \frac{NP + NR}{P + NR} = N
\end{equation}

For long prompts, the speedup approaches $N$ --- near-linear in the rollout factor. This is the regime where DualKV is most impactful: the $(N{-}1)P$ redundant prompt tokens dominate total compute.

\paragraph{Concrete scenarios.} Table~\ref{tab:scenarios} shows the token reduction ratio $\rho$ for representative RL training configurations.

\begin{table}[ht]
\centering
\small
\caption{Token reduction ratio $\rho = T_{\text{std}} / T_{\text{dk}}$ for various $(N, P, R)$ configurations.}
\label{tab:scenarios}
\begin{tabular}{@{}rrrrl@{}}
\toprule
$N$ & $P$ & $R$ & $\rho$ & \textbf{Regime} \\
\midrule
8  & 2K   & 2K  & 1.8$\times$ & Short prompt, moderate rollout \\
8  & 16K  & 2K  & 4.5$\times$ & Long prompt, moderate rollout \\
8  & 64K  & 512 & 7.6$\times$ & Very long prompt \\
16 & 16K  & 2K  & 6.0$\times$ & Long prompt, large rollout \\
32 & 4K   & 2K  & 2.8$\times$ & Short prompt, very large rollout \\
32 & 16K  & 2K  & 7.2$\times$ & Long prompt, very large rollout \\
16 & 32K  & 2K  & 8.5$\times$ & Very long prompt, large rollout \\
16 & 64K  & 512 & 14.3$\times$ & Ultra-long prompt, large rollout \\
\bottomrule
\end{tabular}
\end{table}

Two regimes emerge where DualKV provides substantial benefits:
\begin{enumerate}
  \item \textbf{Large rollout factor} ($N \geq 16$): Common in GRPO and DAPO, where generating many candidate responses per prompt improves reward signal quality. At $N{=}32$ with $P{=}16\text{K}$, the reduction reaches $7.2\times$.
  \item \textbf{Long prompts} ($P \geq 16\text{K}$): Prevalent in agentic tasks, repository-level code generation, and multi-turn dialogue. At $P{=}64\text{K}$, even moderate rollout ($N{=}8$) yields $7.6\times$ reduction, and $N{=}16$ reaches $14.3\times$.
\end{enumerate}

\section{Software Environment}
\label{app:software-stack}

All kernel-level benchmarks (Section~\ref{sec:kernel-speedup}) and end-to-end GRPO training runs (Sections~\ref{sec:e2e-longreason}--\ref{sec:e2e-moe-multinode} and Appendix~\ref{app:gsm8k-shortprompt}) use an identical Python virtual environment. Configuration flags (\texttt{use\_dualkv}, \texttt{ulysses\_sequence\_parallel\_size}, \texttt{attn\_implementation}) are the only variables that differ between runs.

\begin{table}[ht]
\centering
\small
\caption{Software stack used across all experiments. ``DualKV build'' indicates our fork of \texttt{flash\_attn} 2.8.4 with the DualKV kernel (Section~\ref{sec:kernel}) added; base FA2 behavior is unchanged. ``DualKV integration'' indicates our patch to \texttt{verl} 0.7.0 that adds the data-pipeline changes (Section~\ref{sec:dualkv-packing}) and actor/trainer hooks needed to use the DualKV kernel.}
\label{tab:software-stack}
\begin{tabular}{@{}ll@{}}
\toprule
\textbf{Package} & \textbf{Version} \\
\midrule
Python               & 3.12 \\
\texttt{torch}       & 2.9.0+cu128 \\
\texttt{vllm}        & 0.12.0 \\
\texttt{flash\_attn} & 2.8.4 (DualKV build) \\
\texttt{verl}        & 0.7.0 (DualKV integration) \\
\texttt{transformers}& 4.57.6 \\
\texttt{ray}         & 2.55.1 \\
NCCL                 & 2.27.5 \\
\bottomrule
\end{tabular}
\end{table}

The veRL patch (${\sim}1200$ lines across 10 files) modifies the actor's forward pass, the trainer's data pipeline (skipping \texttt{balance\_batch}, enforcing prompt-grouping), and a monkey-patch hook that installs the DualKV attention call on supported model classes. Both the \texttt{flash\_attn} and \texttt{verl} patches are available at \textcolor{blue}{\url{https://github.com/amazon-science/dualkv-flash-attn-for-rl}}.

\section{Additional Experiments}
\label{app:additional-training}

\subsection{GRPO Training on GSM8K (Short Prompts)}
\label{app:gsm8k-shortprompt}

This appendix presents our short-prompt validation experiment (referenced in Section~\ref{sec:experiments} introduction). The experiment stress-tests DualKV in the low-$\rho$ regime ($\rho = 1.8$--$3.8\times$), where kernel-level speedups are modest but memory savings still extend the feasible training frontier.

We evaluate DualKV in end-to-end GRPO training using the veRL framework on GSM8K~\citep{cobbe2021gsm8k}, comparing against standard FA2 packing across a range of prompt lengths.

\paragraph{Setup.} We train Qwen3-8B~\citep{yang2025qwen3} with GRPO on an 8-node cluster of NVIDIA A100-SXM4-40GB GPUs (64 GPUs total). The model is sharded via FSDP across 32 data-parallel ranks (tensor parallelism $=2$), with gradient checkpointing enabled and optimizer states offloaded to CPU. Each training step consists of: (1)~rollout generation via vLLM ($N{=}8$ responses per prompt, $R_{\max}{=}256$), (2)~log-probability computation, and (3)~policy update with gradient accumulation (micro-batch size $=8$ per GPU). We use \texttt{ppo\_mini\_batch\_size}${}=128$ and \texttt{train\_batch\_size}${}=512$.

\paragraph{Dataset.} To control prompt length, we construct synthetic GSM8K variants by prepending few-shot examples to each prompt until the target token count is reached. We sweep prompt lengths $P \in \{1\text{K}, 1.5\text{K}, 2\text{K}, 2.5\text{K}, 3\text{K}\}$ tokens, all with $R{=}256$ and $N{=}8$. Each configuration runs 3 training steps; we report metrics from step~3 (after 2 warmup steps) to exclude one-time initialization costs.

\paragraph{Prompt grouping.} DualKV requires all $N$ responses from the same prompt to be co-located on the same GPU within each micro-batch. We modify veRL's data pipeline to (1)~skip the default \texttt{balance\_batch} reordering when DualKV is enabled, preserving prompt-group contiguity, and (2)~add runtime assertions verifying that every micro-batch contains only same-prompt rollouts. The micro-batch size ($=8$) equals $N$, ensuring each forward/backward pass processes exactly one prompt group. Appendix~\ref{app:pipeline-invariance} proves that these data-pipeline modifications preserve the exact mini-batch gradient estimator $\hat g(B, \theta)$ under data-parallel training, so they do not alter training dynamics at any parameter setting.

\paragraph{Results.} Table~\ref{tab:e2e-gsm8k} reports peak GPU memory (reserved) and wall-clock time for both the full training step and the policy update phase (where DualKV applies). Rollout generation uses vLLM identically in both configurations and is excluded from the policy update timing.

\begin{table}[ht]
\centering
\small
\caption{End-to-end GRPO training: FA2 vs.\ DualKV at micro-batch size 8, $N{=}8$, $R{=}256$. Qwen3-8B on 64$\times$A100-40GB. ``Update'' = policy update time. OOM = out of memory during policy update.}
\label{tab:e2e-gsm8k}
\begin{tabular}{@{}r|r|rr|rr|rr@{}}
\toprule
& & \multicolumn{2}{c|}{\textbf{Peak Memory (GB)}} & \multicolumn{2}{c|}{\textbf{Step Time (s)}} & \multicolumn{2}{c}{\textbf{Update Time (s)}} \\
$P$ & $\rho$ & FA2 & DualKV & FA2 & DualKV & FA2 & DualKV \\
\midrule
1K   & 1.8$\times$ & 37.8 & \textbf{26.5} & 457 & \textbf{449} & 175 & \textbf{168} \\
1.5K & 2.3$\times$ & 44.4 & \textbf{27.3} & 499 & \textbf{486} & 177 & \textbf{169} \\
2K   & 2.9$\times$ & 46.8\rlap{$^\ddagger$} & \textbf{28.1} & 570 & \textbf{528} & 203 & \textbf{171} \\
2.5K & 3.3$\times$ & OOM  & \textbf{29.0} & --- & 584 & --- & 173 \\
3K   & 3.8$\times$ & OOM  & \textbf{29.8} & --- & 618 & --- & 176 \\
\bottomrule
\end{tabular}
\vspace{2pt}

{\footnotesize $\rho = N(P{+}R)/(P{+}NR)$ is the token reduction ratio. $^\ddagger$Exceeds A100-40GB physical memory; PyTorch uses unified memory with severe performance degradation.}
\end{table}

\textbf{Memory scaling.} FA2 peak memory grows rapidly with prompt length (37.8$\to$44.4$\to$46.8~GB), exceeding the A100's 40~GB physical memory at $P{=}1.5\text{K}$ and hitting OOM at $P{=}2.5\text{K}$. DualKV memory grows slowly (26.5$\to$29.8~GB), remaining well within the 40~GB budget at all tested contexts. The memory gap widens from 11.3~GB ($-30\%$) at $P{=}1\text{K}$ to 18.7~GB ($-40\%$) at $P{=}2\text{K}$. Beyond $P{=}2\text{K}$, FA2 cannot train at all at micro-batch size 8, while DualKV continues to scale.

\textbf{Policy update speedup.} The policy update phase (forward/backward passes with gradient accumulation) shows increasing speedup: $1.04\times$ at $P{=}1\text{K}$, $1.05\times$ at $P{=}1.5\text{K}$, and $1.19\times$ at $P{=}2\text{K}$. The update time for DualKV is nearly constant across prompt lengths ($168$--$176$~s), confirming that the shared prompt KV is processed only once regardless of prompt length.

\textbf{End-to-end step time.} Total step time improvement is more modest ($1.02$--$1.08\times$) because rollout generation (vLLM inference) and log-probability computation --- which are identical for both methods --- dominate. At $P{=}2\text{K}$, the policy update accounts for only 36\% of FA2's step time but receives $1.19\times$ speedup, yielding an overall $1.08\times$ improvement.

\textbf{Extending the training frontier.} The most impactful result is that DualKV \emph{enables training} at context lengths where FA2 cannot run. At $P{=}3\text{K}$ with $N{=}8$ and micro-batch size 8, FA2 requires ${\sim}8 \times (3072{+}256) = 26,624$ tokens per micro-batch forward pass, while DualKV requires only $3072 + 8 \times 256 = 5,120$ tokens --- a $5.2\times$ reduction in activation memory. This enables practitioners to train with longer prompts on existing hardware without reducing batch size or adding gradient checkpointing overhead.

\paragraph{Numerical correctness.} We verified that DualKV matches standard FA2 to within half-precision rounding in two independent tests: (1)~kernel-level forward pass comparison using \texttt{torch.allclose} ($\text{atol}{=}10^{-3}$, $\text{rtol}{=}10^{-3}$ in fp16) across all supported head dimensions (64, 96, 128, 192, 256), GQA configurations, and context/decoded lengths; and (2)~end-to-end inference with Qwen2-1.5B via vLLM, where DualKV produced \emph{identical token sequences} to the standard attention baseline across 4 sequences sharing a common prompt (greedy decoding, 128 tokens). These results confirm that the two-phase attention decomposition and gradient accumulation introduce no numerical divergence beyond the expected half-precision rounding.

\paragraph{Note on $\rho$ values.} The token reduction ratios in this experiment ($\rho = 1.8$--$3.8\times$) are moderate because $N{=}8$ and $R{=}256$ (short responses). The kernel-level benchmarks in Table~\ref{tab:benchmark-fwdbwd} use $N{=}28$ and $R{=}2048$, achieving $\rho$ up to $7\times$ with correspondingly larger speedups. In production RL training with $N{=}16$--$32$ and longer responses, the end-to-end benefits will be substantially greater.

\subsection{DAPO Training on LongReason}
\label{app:dapo-longreason}

DualKV benefits any sampling-based RL method that packs $N$ same-prompt responses into a micro-batch. To demonstrate this generality beyond GRPO, we repeat the LongReason experiment (Section~\ref{sec:e2e-longreason}) using DAPO~\citep{yu2025dapo} --- a sampling-based RL method that removes the KL penalty (no reference model forward pass) and uses dynamic clipping with token-level loss aggregation.

\paragraph{Setup.} Identical to Section~\ref{sec:e2e-longreason}: Qwen3-8B on a single \texttt{p5.48xlarge} instance (8$\times$H100), FSDP2 + BF16 + gradient checkpointing, vLLM rollout (TP${=}2$, $N{=}32$), \texttt{train\_batch\_size}${=}128$, \texttt{ppo\_mini\_batch\_size}${=}64$, $P_{\max}{=}8192$, $R_{\max}{=}2048$, 25 training steps. DAPO-specific: \texttt{clip\_ratio\_low}${=}0.2$, \texttt{clip\_ratio\_high}${=}0.28$, \texttt{clip\_ratio\_c}${=}10.0$, \texttt{loss\_agg\_mode}${=}$token-mean, no KL loss. Because DAPO does not use a reference model, the \texttt{ref\_log\_prob} phase is absent; the training step consists of rollout, \texttt{old\_log\_prob}, and the policy update only.

\paragraph{Comparisons.} Same four configurations as Section~\ref{sec:e2e-longreason}: FA2 (mb=4), FA3 (mb=4), DualKV (mb=4), and DualKV (mb=8).

\paragraph{Results.} Table~\ref{tab:dapo-longreason} summarizes the results. DualKV delivers $1.91\times$ (mb=4) and $2.47\times$ (mb=8) policy-update speedup over FA2, comparable to the GRPO speedups ($1.63\times$/$2.09\times$). The slightly higher speedups reflect the absence of the ref phase: the policy update's share of step time is larger in DAPO, so DualKV's per-phase compression translates more directly to end-to-end gains. Step speedup is $1.63\times$ (mb=4) and $1.82\times$ (mb=8). MFU rises from $30.9\%$ (FA2) to $77.4\%$ (DualKV mb=8) --- a $2.5\times$ utilization gain. Peak memory drops from $106$\,GB to $80$\,GB (mb=4) and $93$\,GB (mb=8), confirming the same memory-headroom story. All four configs track identical training accuracy curves (final reward $\approx 0.85$--$0.87$), confirming no convergence degradation.

\begin{table}[ht]
\centering
\scriptsize
\setlength{\tabcolsep}{3pt}
\caption{DAPO training on LongReason (Qwen3-8B, 8$\times$H100). Same setup as Table~\ref{tab:e2e-longreason} but with DAPO instead of GRPO. No \texttt{ref\_log\_prob} phase (DAPO removes the KL penalty). Speedups relative to FA2 mb=4.}
\label{tab:dapo-longreason}
\begin{tabular}{@{}lc c c c c c c c c@{}}
\toprule
\textbf{Config} & \textbf{mb/GPU} & \textbf{Peak Mem} & \textbf{Policy Update} & \textbf{old\_log\_prob} & \textbf{ref\_log\_prob} & \textbf{Rollout$^{*}$} & \textbf{Step} & \textbf{MFU} & \textbf{Reward} \\
 & & \textbf{(GB)} & \textbf{(s)} & \textbf{(s)} & \textbf{(s)} & \textbf{(s)} & \textbf{(s)} & \textbf{(\%)} & \textbf{(step 25)} \\
\midrule
FA2      & 4 & 106 & 597 (1.00$\times$) & 153 (1.00$\times$) & --- & 236 (1.00$\times$) & 987 (1.00$\times$) & 30.9 & 0.848 \\
FA3      & 4 & 106 & 543 (1.10$\times$) & 142 (1.08$\times$) & --- & 234 (1.01$\times$) & 919 (1.07$\times$) & 34.0 & 0.859 \\
DualKV   & 4 &  80 & 313 (1.91$\times$) &  63 (2.43$\times$) & --- & 231 (1.02$\times$) & 606 (1.63$\times$) & 59.3 & 0.853 \\
DualKV   & 8 &  93 & 242 (2.47$\times$) &  63 (2.43$\times$) & --- & 237 (1.00$\times$) & 542 (1.82$\times$) & 77.4 & 0.868 \\
\bottomrule
\end{tabular}
\vspace{2pt}

{\scriptsize $^{*}$\texttt{Rollout} (vLLM inference) is not targeted by DualKV.}
\end{table}

\paragraph{Comparison with GRPO.} The per-phase speedups are consistent with Section~\ref{sec:e2e-longreason}: DualKV compresses \texttt{old\_log\_prob} at ${\sim}2.4\times$ (vs ${\sim}2.1\times$ in GRPO) and the policy update at $1.91$--$2.47\times$ (vs $1.63$--$2.09\times$). The slightly larger DAPO speedups are attributable to (i) absence of the ref phase reducing the denominator, and (ii) DAPO's token-level loss aggregation producing marginally different gradient-accumulation patterns that favor larger effective batch sizes. The key takeaway is that DualKV uniformly accelerates any sampling-based RL method that packs $N$ same-prompt responses.

\begin{figure}[ht]
\centering
\includegraphics[width=\textwidth]{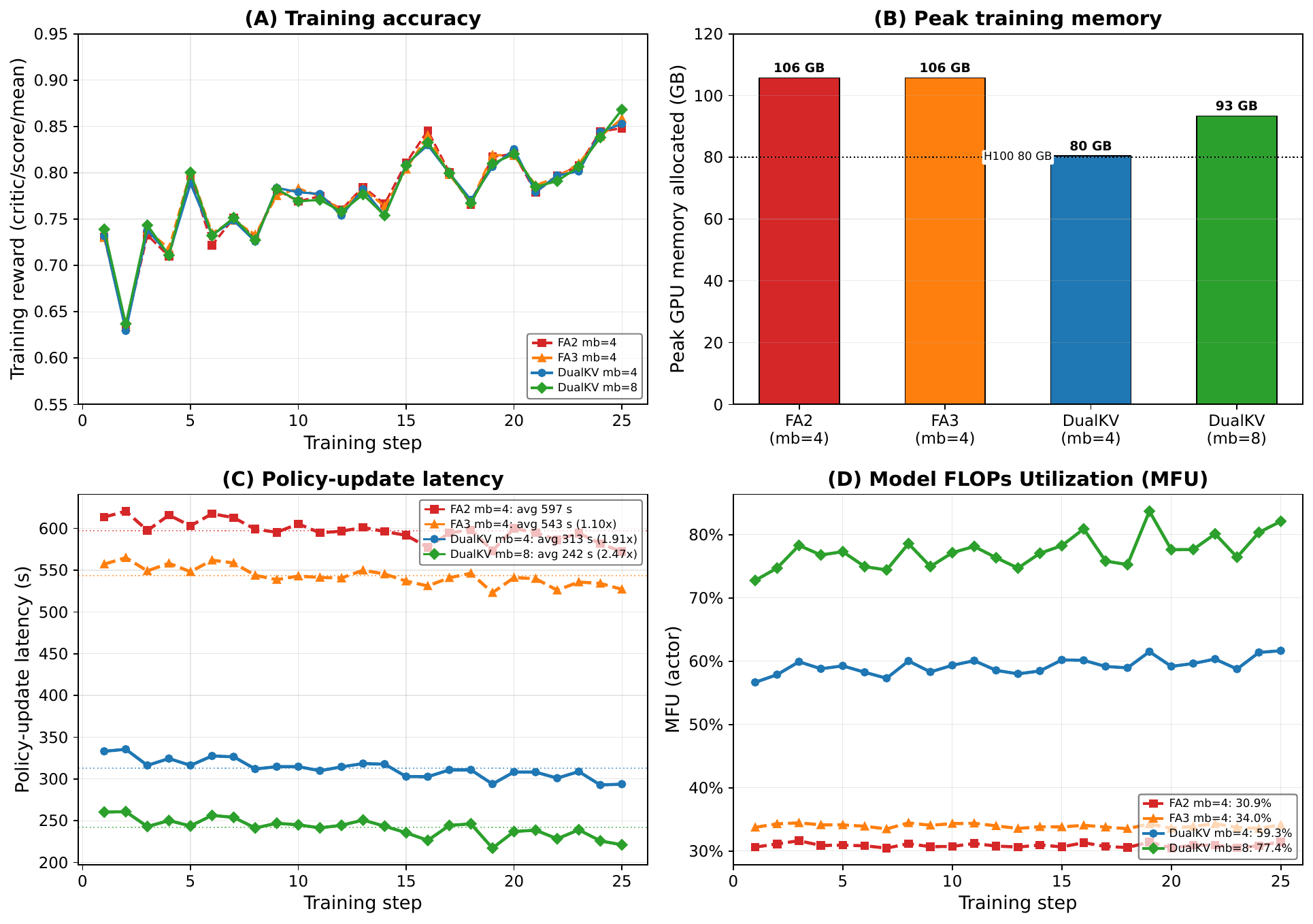}
\caption{DAPO training on LongReason (Qwen3-8B, 8$\times$H100). \textbf{(A)}~All four configurations track identical reward curves, confirming no convergence degradation. \textbf{(B)}~Peak memory: FA2/FA3 exceed the H100 80\,GB physical limit (${\sim}106$\,GB allocated), while DualKV mb=4 fits at ${\sim}80$\,GB. \textbf{(C)}~Policy-update latency: DualKV mb=8 achieves $2.47\times$ speedup over FA2. \textbf{(D)}~MFU rises from $31\%$ (FA2) to $77\%$ (DualKV mb=8).}
\label{fig:dapo-longreason-comparison}
\end{figure}

\subsection{Context-Length Scaling Sweep (Llama-3.1-8B)}
\label{app:context-sweep}

To validate that DualKV's benefits generalize beyond Qwen and to empirically characterize memory scaling, we conduct a context-length sweep on Llama-3.1-8B-Instruct~\citep{dubey2024llama3} across prompt lengths $P \in \{8\text{K}, 16\text{K}, 32\text{K}, 48\text{K}, 64\text{K}, 96\text{K}\}$ and micro-batch sizes $\text{mb} \in \{4, 8, 16\}$.

\paragraph{Setup.} Hardware: 2$\times$\texttt{p5.48xlarge} (16$\times$H100-80GB). GRPO with $N{=}32$, $R_{\max}{=}2048$, \texttt{train\_batch\_size}${=}128$, vLLM rollout (TP${=}2$), DualKV + FusedLinearForPPO enabled, \texttt{gpu\_memory\_utilization}${=}0.4$.

\paragraph{Memory scaling law.} Figure~\ref{fig:memory-scaling-law} plots peak GPU memory (\texttt{max\_memory\_allocated}) against prompt length for all 18 configurations. DualKV's memory follows:
\begin{equation}
  \text{mem}_{\text{DualKV}} = \underbrace{M_0}_{\substack{\text{model + optim +}\\\text{framework}}} + \underbrace{c_P \cdot P}_{\substack{\text{prompt activations}\\\text{(paid once)}}} + \underbrace{c_{\text{mb}R} \cdot \text{mb} \cdot R}_{\substack{\text{response activations}\\\text{(scales with mb)}}}
  \label{eq:mem-scaling}
\end{equation}
where the coefficients are estimated by least-squares regression across all 18 data points ($6$ prompt lengths $\times$ $3$ micro-batch sizes): $M_0 \approx 41.1$\,GB, $c_P \approx 0.475$\,GB/Ktoken, $c_{\text{mb}R} \approx 0.122$\,GB/Ktoken ($r^2 = 0.989$). The asymmetry $c_P \gg c_{\text{mb}R}$ arises because prompt tokens incur higher per-token activation cost under gradient checkpointing: block-boundary activations are retained for all $P$ positions across all layers, and the context self-attention backward (Call~1) saves \texttt{softmax\_lse} of size $O(P \times H)$. Response tokens in Call~2 attend only to their own shorter decoded subsequence, requiring less stored state. The key insight is structural: the prompt term $c_P \cdot P$ does not multiply by mb --- DualKV processes the shared prompt once regardless of how many responses are in the micro-batch. Under standard FA2 packing, each of the mb sequences carries its own full copy of the prompt, so memory scales as:
\begin{equation}
  \text{mem}_{\text{FA2}} = \underbrace{M_0}_{\text{same}} + \underbrace{c_P \cdot \text{mb} \cdot (P + R)}_{\text{prompt replicated mb times}}
  \label{eq:mem-scaling-fa2}
\end{equation}
making memory \emph{linear in mb}. FA2 values in Figure~\ref{fig:memory-scaling-law} are projections from Eq.~\ref{eq:mem-scaling-fa2} (using the same fitted $M_0$ and $c_P$) rather than measurements, because FA2 triggers CUDA OOM at these context lengths --- the standard $\text{mb} \times (P + R)$ packing exceeds the H100's 80\,GB HBM for most configurations in this sweep.

\paragraph{Practical implication.} When $P \gg \text{mb} \cdot R$ (the long-prompt regime common in agentic and reasoning tasks), memory is dominated by the prompt term and becomes effectively independent of micro-batch size. Practitioners can freely increase mb to maximize GPU utilization without memory penalty: at $P{=}96\text{K}$, increasing mb from 4 to 16 adds only ${\sim}4$\,GB ($<5\%$), whereas FA2 would require $225 \to 775$\,GB --- nearly $10\times$ the physical capacity of an H100.

\begin{figure}[ht]
\centering
\includegraphics[width=0.85\textwidth]{figures/memory_scaling_law.pdf}
\caption{Memory scaling law: DualKV (measured, solid) vs.\ FA2 (projected, dashed). Llama-3.1-8B, $N{=}32$, $R{=}2048$, 16$\times$H100. DualKV's three micro-batch curves cluster tightly ($\text{mem} \propto P + \text{mb} \cdot R$), while FA2's diverge ($\text{mem} \propto \text{mb} \cdot (P+R)$). FA2 curves are projected from the fitted model since FA2 OOMs at these configurations.}
\label{fig:memory-scaling-law}
\end{figure}

\subsection{Comparison with Prefix Grouper: Single-Layer Benchmark}
\label{app:prefix-grouper-comparison}

To empirically validate DualKV's advantage over framework-level prefix sharing, we benchmark against Prefix Grouper~\citep{liu2025prefixgrouper} on a single Qwen3-8B decoder layer (forward + backward) on one H100-80GB GPU. Prefix Grouper decomposes attention into prefix self-attention and suffix cross-attention (2$\times$ FA2 calls), but operates on \emph{padded} 2D tensors: all $N$ sequences are padded to the longest response length and the full padded tensor passes through QKV projections, output projection, LayerNorm, and MLP. We give Prefix Grouper the best possible backend (padded \texttt{flash\_attn\_func}, not the default HuggingFace wrapper) and compare against both standard FA2 varlen (the baseline) and DualKV.

\paragraph{Setup.} Qwen3-8B dimensions ($H{=}32$, $H_k{=}8$, $d{=}128$), bf16, response lengths $R \sim U(131, 2048)$. We sweep prompt lengths $P \in \{5059, 8192, 16384, 32768, 65536, 131072\}$ and micro-batch sizes $\text{mb} \in \{1, 2, 4, 8, 16, 32\}$. Timing: 2 warmup + 5 measured iterations. Correctness verified via \texttt{torch.allclose} against FA2 at smaller configs (both DualKV and PG pass; PG has zero error since it uses FA2 internally).

\begin{table}[ht]
\centering
\small
\caption{Single-layer forward+backward: DualKV vs.\ Prefix Grouper (PG) vs.\ FA2. Time in ms, memory in GB. ``DK/FA2'' and ``DK/PG'' are DualKV's speedup ratios. OOM = out of memory on H100-80GB.}
\label{tab:prefix-grouper-comparison}
\setlength{\tabcolsep}{3.5pt}
\begin{tabular}{@{}rr|rrr|rr|rr@{}}
\toprule
& & \multicolumn{3}{c|}{\textbf{Time (ms)}} & \multicolumn{2}{c|}{\textbf{DualKV Speedup}} & \multicolumn{2}{c}{\textbf{Peak Memory (GB)}} \\
$P$ & mb & FA2 & DualKV & PG & vs FA2 & vs PG & DualKV & PG \\
\midrule
5K   & 4  & 93.2   & \textbf{36.6}  & 72.5   & 2.55$\times$ & 1.98$\times$ & \textbf{4.0}  & 9.0 \\
5K   & 8  & 153.3  & \textbf{61.4}  & 156.9  & 2.50$\times$ & 2.55$\times$ & \textbf{5.5}  & 18.1 \\
5K   & 16 & 338.9  & \textbf{110.4} & 314.2  & 3.07$\times$ & 2.85$\times$ & \textbf{9.0}  & 35.4 \\
5K   & 32 & 597.2  & \textbf{179.9} & 626.3  & 3.32$\times$ & 3.48$\times$ & \textbf{13.7} & 69.8 \\
\midrule
8K   & 4  & 148.9  & \textbf{57.2}  & 124.3  & 2.60$\times$ & 2.17$\times$ & \textbf{4.9}  & 12.8 \\
8K   & 8  & 245.4  & \textbf{70.3}  & 232.1  & 3.49$\times$ & 3.30$\times$ & \textbf{6.5}  & 25.8 \\
8K   & 16 & 549.2  & \textbf{128.2} & 465.4  & 4.28$\times$ & 3.63$\times$ & \textbf{9.9}  & 50.7 \\
8K   & 32 & \multicolumn{1}{c}{OOM} & \textbf{184.2} & \multicolumn{1}{c|}{OOM} & $\infty$ & $\infty$ & \textbf{14.6} & --- \\
\midrule
16K  & 4  & 329.9  & \textbf{108.9} & 245.7  & 3.03$\times$ & 2.26$\times$ & \textbf{7.5}  & 22.9 \\
16K  & 8  & 553.0  & \textbf{125.4} & 434.9  & 4.41$\times$ & 3.47$\times$ & \textbf{9.0}  & 45.9 \\
16K  & 16 & \multicolumn{1}{c}{OOM} & \textbf{205.8} & \multicolumn{1}{c|}{OOM} & $\infty$ & $\infty$ & \textbf{12.5} & --- \\
\midrule
32K  & 2  & 408.0  & \textbf{220.3} & 308.1  & 1.85$\times$ & 1.40$\times$ & \textbf{11.8} & 22.1 \\
32K  & 4  & 810.2  & \textbf{246.4} & 513.6  & 3.29$\times$ & 2.08$\times$ & \textbf{12.5} & 43.0 \\
32K  & 8  & \multicolumn{1}{c}{OOM} & \textbf{266.8} & \multicolumn{1}{c|}{OOM} & $\infty$ & $\infty$ & \textbf{14.1} & --- \\
\midrule
65K  & 2  & 1173.5 & \textbf{614.1} & 773.3  & 1.91$\times$ & 1.26$\times$ & \textbf{21.9} & 42.3 \\
65K  & 4  & \multicolumn{1}{c}{OOM} & \textbf{653.9} & \multicolumn{1}{c|}{OOM} & $\infty$ & $\infty$ & \textbf{22.7} & --- \\
65K  & 16 & \multicolumn{1}{c}{OOM} & \textbf{892.6} & \multicolumn{1}{c|}{OOM} & $\infty$ & $\infty$ & \textbf{27.7} & --- \\
\midrule
131K & 8  & \multicolumn{1}{c}{OOM} & \textbf{2096.7} & \multicolumn{1}{c|}{OOM} & $\infty$ & $\infty$ & \textbf{44.5} & --- \\
131K & 32 & \multicolumn{1}{c}{OOM} & \textbf{2883.0} & \multicolumn{1}{c|}{OOM} & $\infty$ & $\infty$ & \textbf{52.6} & --- \\
\bottomrule
\end{tabular}
\end{table}

\textbf{PG $\approx$ FA2 in latency.} Prefix Grouper's attention optimization (avoiding redundant prefix-on-prefix QK$^\top$) is offset by the overhead of padding, KV expansion (\texttt{expand + contiguous}), and concatenation. Across all configs where both run, PG is within $\pm$20\% of FA2 --- sometimes slightly faster (attention savings dominate at large $P$), sometimes slightly slower (padding overhead dominates at large mb).

\textbf{DualKV is 2--4$\times$ faster than both.} The speedup comes from processing fewer tokens through \emph{every} operation in the layer. At the paper's primary config ($P{=}8\text{K}$, mb=8), DualKV processes 18K tokens vs.\ FA2's 75K and PG's 81K --- a $4.2\times$ reduction that translates directly to a $3.5\times$ wall-clock speedup.

\textbf{Memory gap is catastrophic for PG.} PG allocates a padded $[\text{mb}, P+R_{\max}, H]$ tensor, consuming $3$--$5\times$ the memory of DualKV's packed representation. At $P{=}8\text{K}$ mb=16, PG uses 50.7\,GB vs.\ DualKV's 9.9\,GB ($5.1\times$ reduction); at mb=32, PG OOMs while DualKV fits in 14.6\,GB.

\textbf{Scalability.} Of 54 total configurations tested (9 prompt lengths $\times$ 6 micro-batch sizes), DualKV runs 36 configs, while both FA2 and PG run only 21. Beyond $P{=}131\text{K}$, only DualKV can scale past mb=1 on a single H100.

\textbf{Root cause: PG only optimizes attention.} Attention accounts for ${\sim}30\%$ of a decoder layer's FLOPs; the remaining 70\% (QKV projections, output projection, MLP, norms) still operates on the full padded tensor with $N$ duplicated prompt copies. DualKV eliminates redundancy across \emph{all} operations by packing the prompt once in a varlen representation.

\section{Additional Experimental Figures}
\label{app:additional-figures}

\begin{figure}[ht]
\centering
\includegraphics[width=1.05\textwidth]{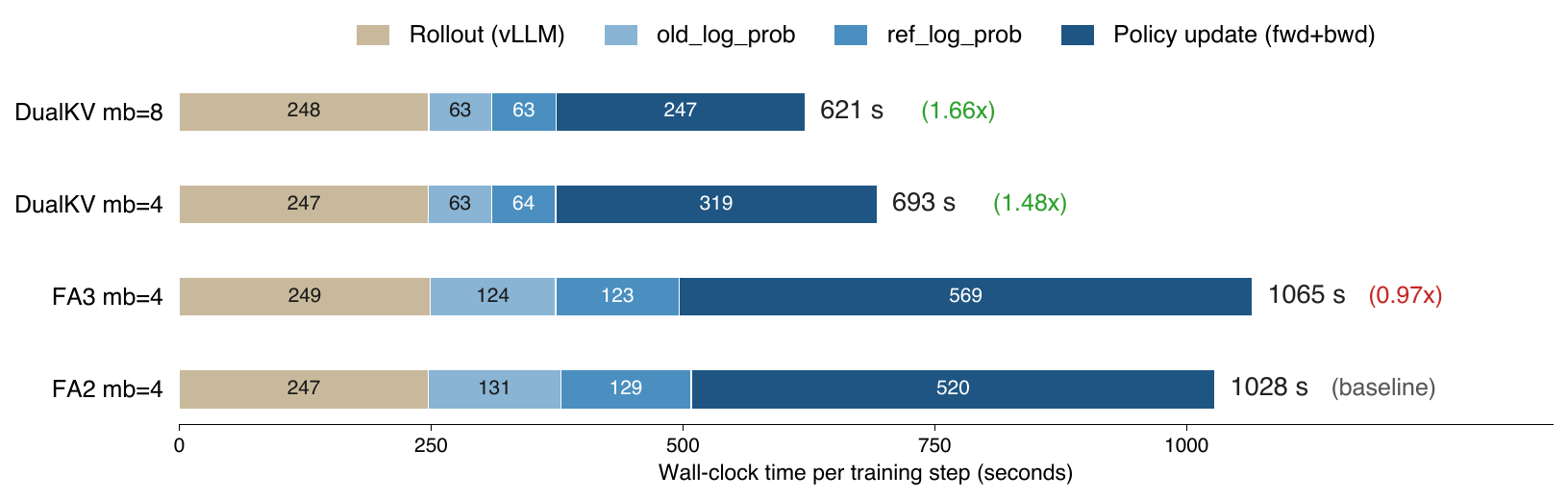}
\caption{Per-phase breakdown of an average GRPO training step on LongReason (Section~\ref{sec:e2e-longreason}). DualKV compresses the three training-side phases (\texttt{old\_log\_prob}, \texttt{ref\_log\_prob}, \texttt{policy update}) by ${\sim}2\times$ each; rollout latency is identical across all four configs. \textbf{(i)} the policy-update slice shrinks from $520$\,s (FA2, FA3) to $247$\,s ($2.1\times$ over FA2); \textbf{(ii)} the rollout slice is invariant (${\sim}247$\,s across all four configs); \textbf{(iii)} rollout's share of step time grows from $23\%$ (FA2 mb=4) to $40\%$ (DualKV mb=8) --- DualKV shifts the bottleneck from the policy update to rollout generation.}
\label{fig:longreason-step-breakdown}
\end{figure}

\begin{figure}[ht]
\centering
\includegraphics[width=1.05\textwidth]{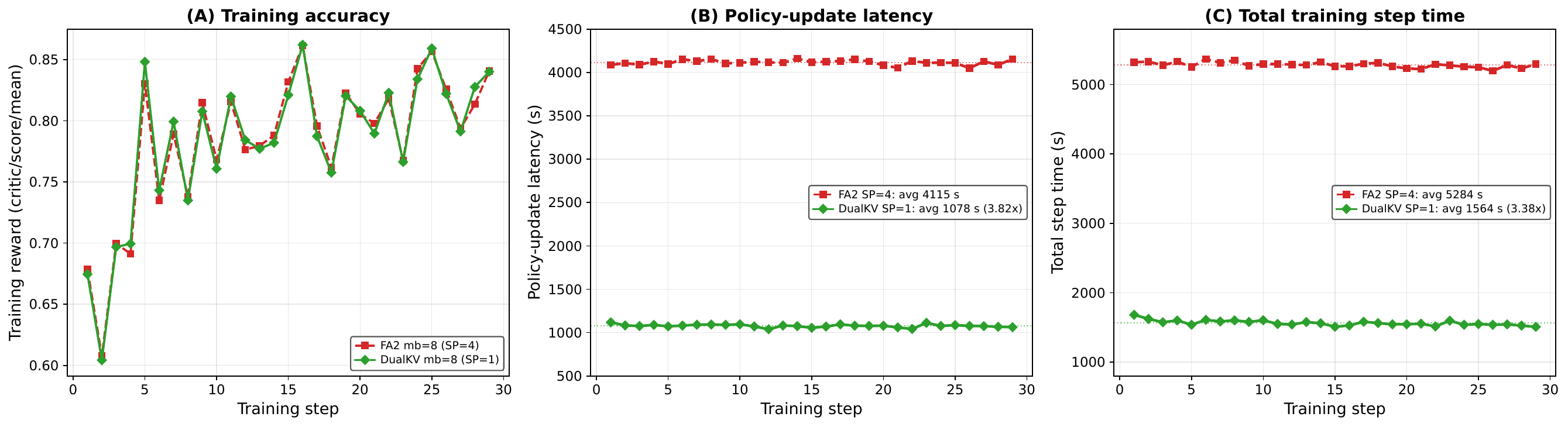}
\caption{Multi-node MoE GRPO training on LongReason (Section~\ref{sec:e2e-moe-multinode}; Qwen3-30B-A3B, 16$\times$H100, 2 nodes), comparing DualKV mb=8 SP=1 against FA2 mb=8 SP=4 over the common 1--29-step window. \textbf{(A)} Per-step training reward --- both configs track closely. \textbf{(B)} Per-step policy-update latency --- DualKV averages 1078\,s vs FA2's 4115\,s ($3.82\times$). \textbf{(C)} Per-step total training step time --- DualKV averages 1564\,s vs FA2's 5284\,s ($3.38\times$).}
\label{fig:moe-multinode}
\end{figure}

\begin{figure}[ht]
\centering
\includegraphics[width=1.05\textwidth]{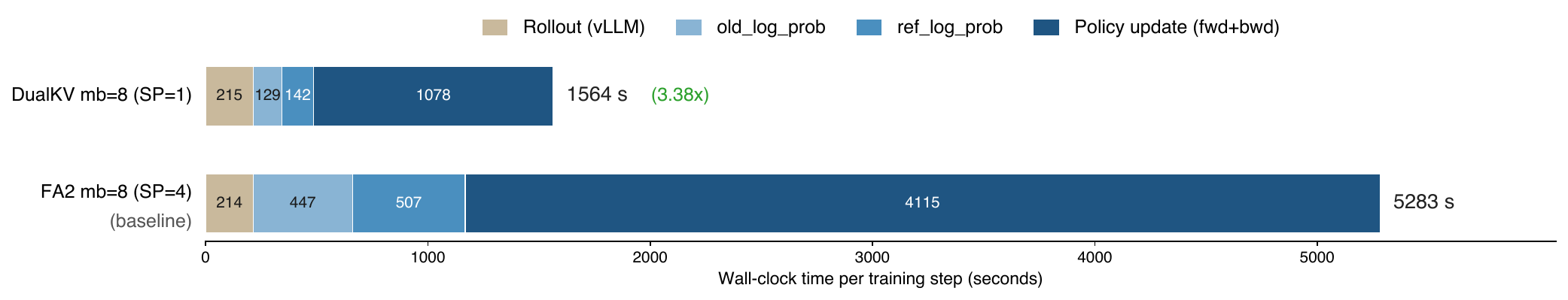}
\caption{Per-phase breakdown of an average GRPO training step for the multi-node MoE experiment (Section~\ref{sec:e2e-moe-multinode}). DualKV compresses the three training-side phases by $3.4$--$3.6\times$ each; rollout latency is identical across both configs. The FA2 bar uses 4-way Ulysses SP --- the smallest SP that avoids OOM at this scale --- while DualKV runs at SP=1.}
\label{fig:moe-multinode-breakdown}
\end{figure}

\section{DualKV Kernel Algorithms}
\label{app:kernel-algorithms}

This appendix gives full pseudocode for the DualKV forward and backward kernels (referenced from Sections~\ref{sec:kernel-forward}--\ref{sec:kernel-backward}). The listings elide K/V parallels: gradients, loads, and writes for $V$ mirror those for $K$ and are omitted for brevity.

\begin{algorithm}[H]
\small
\caption{DualKV forward kernel (one thread block, assigned $(m, b, h)$ = query-row block, batch, head)}
\label{alg:dualkv-forward}
\begin{algorithmic}[1]
\Require $Q_m$ for batch $b$, head $h$; shared $K_c, V_c \in \mathbb{R}^{P \times H_k \times d}$; per-sequence $K_d^{(b)}, V_d^{(b)} \in \mathbb{R}^{R_b \times H_k \times d}$; tile size $B_N$; $P$ (= \texttt{context\_seqlen})
\State $n_{\text{ctx}} \gets \lceil P / B_N \rceil$
\State $n_{\text{dec}} \gets \lceil R_b / B_N \rceil$
\State $n_{\max} \gets n_{\text{ctx}} + n_{\text{dec}}$
\State Apply causal pruning: reduce $n_{\max}$ so no tile exceeds the causal horizon of $Q_m$
\State Init $m_i \gets -\infty$,\; $\ell_i \gets 0$,\; $\mathbf{o}_i \gets 0$ \Comment{online-softmax state per row in $Q_m$}
\For{$n = n_{\max}{-}1$ down to $0$} \Comment{iterate K-tiles in reverse (FA2 convention)}
    \If{$n < n_{\text{ctx}}$} \Comment{context tile: shared buffer}
        \State $K_n, V_n \gets K_c[n B_N : n B_N + B_N], V_c[\ldots]$
        \State $j_{\text{base}} \gets n \cdot B_N$ \Comment{logical K position}
    \Else \Comment{decoded tile: per-sequence buffer}
        \State $K_n, V_n \gets K_d^{(b)}[(n{-}n_{\text{ctx}}) B_N : \ldots]$
        \State $j_{\text{base}} \gets P + (n - n_{\text{ctx}}) \cdot B_N$
    \EndIf
    \State Mask out-of-bounds lanes (partial tile at last context / last decoded tile)
    \State $S \gets Q_m K_n^\top / \sqrt{d}$
    \State Apply causal mask using $j_{\text{base}}$ (logical K position) vs.\ $Q_m$'s logical row position
    \State $m_i^{\text{new}} \gets \max(m_i,\; \text{rowmax}(S))$
    \State $P \gets \exp(S - m_i^{\text{new}})$
    \State $\alpha \gets e^{m_i - m_i^{\text{new}}}$ \Comment{online-softmax rescale factor}
    \State $\ell_i \gets \alpha \, \ell_i + \text{rowsum}(P)$
    \State $\mathbf{o}_i \gets \alpha \, \mathbf{o}_i + P \cdot V_n$
    \State $m_i \gets m_i^{\text{new}}$
\EndFor
\State $\mathbf{o}_i \gets \mathbf{o}_i / \ell_i$ \Comment{epilogue: normalize}
\State Write $\mathbf{o}_i$ to $o[m]$
\State Save $L = m_i + \log \ell_i$ to \texttt{softmax\_lse}[m] \Comment{used by backward}
\end{algorithmic}
\end{algorithm}

The backward main kernel (Algorithm~\ref{alg:dualkv-backward}) and the post-kernel context-gradient cast (Algorithm~\ref{alg:dualkv-bwd-convert}) are in Section~\ref{sec:kernel-backward}.

\section{Kernel Interface: Python Signature and Tensor Shapes}
\label{app:kernel-interface}

This appendix gives the full Python signature of the DualKV kernel (referenced from Section~\ref{sec:kernel}) and the per-tensor shape contract.

\begin{lstlisting}[language=Python, caption=Signature of DualKV kernel interface \texttt{flash\_attn\_dualkv\_varlen\_func}]
out = flash_attn_dualkv_varlen_func(
    q,                      # (sum_Ri, H,   d)   decoded queries, packed varlen
    k_context, v_context,   # (P,      H_k, d)   shared context KV (single copy)
    k_decoded, v_decoded,   # (sum_Ri, H_k, d)   per-sequence decoded KV, packed varlen
    cu_seqlens_q,           # (N+1,)             varlen offsets into q, k_decoded, v_decoded
    cu_seqlens_k_decoded,   # (N+1,)             same offsets (context is shared)
    max_seqlen_q,           # int                max R_i
    context_seqlen,         # int = P            causal offset: decoded query at index r
                            #                      has logical position P + r
    max_seqlen_k_decoded,   # int                max R_i
    softmax_scale=None,     # float              defaults to 1/sqrt(d)
    causal=True,            # bool               training always sets True
)
\end{lstlisting}

The key contract is that $k_{\text{ctx}}$ and $v_{\text{ctx}}$ (\texttt{k\_context}, \texttt{v\_context} in the listing) are stored once (not replicated $N$ times), while $q$, $k_{\text{dec}}$, and $v_{\text{dec}}$ are varlen-packed over the $N$ decoded sequences. The kernel is implemented as a PyTorch \texttt{torch.autograd.Function}. Internally, the forward also computes a per-query softmax log-sum-exp tensor \texttt{softmax\_lse} of shape $(H, \sum_i R_i)$ that is saved onto the autograd context for the backward pass (Section~\ref{sec:kernel-backward}); it is not part of the user-facing return value.


\section{Implementation Details: Extending the Kernel to hd512 and SWA}
\label{app:hdim-swa}

This appendix documents the two kernel variants added for Gemma-4 support (Section~\ref{sec:kernel}): a head-dimension-$512$ full-attention path and kernel-native causal sliding-window attention (SWA) for head dimension $256$. Both are compile-time specializations of the DualKV forward/backward kernels; the launcher dispatches on head dimension and window at runtime. FA2 supports only $d \le 256$, so the $d{=}512$ path is what enables Gemma-4's global layers under shared-prompt deduplication.

\textbf{Head dimension 512 (global layers).} The forward launches \texttt{Flash\_fwd\_kernel\_traits} with
head dim $512$, block sizes $B_M{=}B_N{=}64$, and $4$ warps, restricted to full causal attention (no windowing). Because
the $d{=}512$ tiles exceed the $48$\,KB static shared-memory limit, the kernel opts into dynamic shared memory via
\texttt{cudaFuncAttributeMaxDynamicSharedMemorySize} at launch. The backward is the tighter case: at $d{=}512$ both
shared memory and registers are under pressure, so it uses a smaller key block ($B_M{=}64$, $B_N{=}32$, $8$ warps),
holds $V$ in registers, and single-buffers the KV pipeline (no double buffering) to fit within the device
shared-memory-opt-in ceiling (queried at runtime via \texttt{cudaDevAttrMaxSharedMemoryPerBlockOptin}). These are the
standard FA2 levers for large head dims, applied within DualKV's two-region iteration. The $d{=}512$ path is used only
for Gemma-4's full-attention (global) layers.

\textbf{Sliding-window attention (local layers).} Gemma-4's local layers use head dim $256$ with a causal
sliding window $W$. We add kernel-native SWA to the DualKV kernel (compile-time \texttt{Is\_local}) for $d \le 256$:
within the two-region iteration, the window bound is applied in the logical position space (context positions
$0{\ldots}P{-}1$ followed by decoded positions $P{+}r$), so a decoded query at logical position $P{+}r$ attends only the
last $W$ logical keys spanning the prompt tail and its own response so far --- identical semantics to running windowed
FA2 over the contiguous $[K_c; K_d^{(i)}]$ sequence, but without replicating the shared context $N$ times. Out-of-window
context tiles are skipped at the block level. The result is verified mathematically equivalent to windowed FA2 (forward
and backward) by the kernel unit tests.

\textbf{Per-layer dispatch.} A single DualKV forward over Gemma-4 dispatches per decoder layer:
full-attention $d{=}512$ for the $10$ global layers, windowed $d{=}256$ for the $50$ sliding layers. The shared-prompt
repack (Section~\ref{sec:dualkv-packing}) is layer-agnostic, so the same packed $P{+}NR$ sequence feeds both layer types.

\section{Comparison with Other Attention Variants}
\label{app:attention-comparison}

Readers familiar with attention-optimization literature may ask whether existing variants --- paged attention, prefix caching, bifurcated attention, sequence-parallel methods, or sparse/linear attention --- already address the shared-prompt redundancy we target. This appendix systematically positions DualKV against these variants along four capabilities relevant to RL policy-update training:

\begin{enumerate}
  \item \textbf{Shared-prompt KV dedup}: the method stores the shared prompt's $K, V$ once rather than $N$ times.
  \item \textbf{Shared-prompt compute dedup}: the method also processes the shared prompt through per-token operations (RMSNorm, QKV projection, RoPE, MLP, output projection) \emph{once} rather than $N$ times --- i.e., eliminates the $(N{-}1) \cdot P$ tokens of compute outside attention.
  \item \textbf{Training backward + autograd integration}: the method provides a differentiable backward pass, with gradients for all inputs compatible with PyTorch autograd (including gradient accumulation into the shared $K_c, V_c$ across the $N$ concurrent sequences).
  \item \textbf{Mathematically exact}: the method produces outputs identical to standard attention up to floating-point rounding, not an approximation.
\end{enumerate}

Table~\ref{tab:attention-comparison} summarizes the comparison. DualKV is the only variant that satisfies all four criteria simultaneously.

\begin{table}[ht]
\centering
\scriptsize
\setlength{\tabcolsep}{4pt}
\caption{Comparison of attention variants along four capabilities relevant to RL training. $\checkmark$ = satisfies; $\times$ = does not. DualKV is the only variant that is $\checkmark$ on all four.}
\label{tab:attention-comparison}
\begin{tabular}{@{}l c c c c@{}}
\toprule
\textbf{Variant} & \textbf{KV dedup} & \textbf{Compute dedup} & \textbf{Training bwd} & \textbf{Exact} \\
\midrule
FlashAttention-2/3~\citep{dao2023flashattention2,shah2024flashattention3} & $\times$ & $\times$ & $\checkmark$ & $\checkmark$ \\
Paged Attention~\citep{kwon2023vllm}              & storage only       & $\times$ & $\times$  & $\checkmark$ \\
SGLang RadixAttention~\citep{zheng2023sglang}     & storage only       & $\times$ & $\times$  & $\checkmark$ \\
Bifurcated Attention~\citep{athiwaratkun2024bifurcated} & attn.\ only & $\times$ & $\times$  & $\checkmark$ \\
Ring Attention~\citep{liu2023ringattention}        & $\times$           & $\times$ & $\checkmark$       & $\checkmark$ \\
Ulysses SP~\citep{jacobs2023ulysses}              & $\times$           & $\times$ & $\checkmark$       & $\checkmark$ \\
Linear / sparse attention                          & $\times$           & $\times$ & $\checkmark$       & $\times$ (approx.) \\
KV-cache compression (H2O, SnapKV, etc.)           & $\times$           & $\times$ & $\times$ & $\times$ (lossy) \\
\midrule
\textbf{DualKV (ours)}                             & $\checkmark$       & $\checkmark$ & $\checkmark$   & $\checkmark$ \\
\bottomrule
\end{tabular}
\end{table}

\paragraph{Inference-only variants (paged, RadixAttention, bifurcated).} These methods share the shared-prompt KV at \emph{storage} level (via block tables or memory-layout tricks) during inference. None defines a backward pass for training. Naively invoking a paged or bifurcated kernel in a training loop would fail because (i) PyTorch autograd cannot differentiate through block-table pointer indirection --- the shared KV must be a proper tensor with a computation graph; (ii) with $N$ concurrent sequences backward-propagating into the \emph{same} $K_c, V_c$, the gradient accumulation requires race-free concurrent writes with precision preservation (DualKV's fp32 atomic accumulation, Section~\ref{sec:kernel-backward}); and (iii) they address attention only, leaving the $(N{-}1) \cdot P$ per-token cost in MLP, norms, and projections unaffected.

\paragraph{Sequence-parallel variants (Ring, Ulysses).} These are orthogonal techniques for long-sequence training: they
shard a \emph{single} long sequence across multiple ranks via collective communication. They do not deduplicate the
\emph{shared prompt across multiple sequences} --- that is a different axis of redundancy. Because the two
axes are orthogonal, DualKV composes with sequence parallelism: its shared-prompt compression \emph{delays} the onset of
SP (fitting larger context and batch on fewer ranks) and eliminates SP entirely across a wide range of
model-size$\,\times\,$context regimes (Sections~\ref{sec:e2e-longreason}--\ref{sec:e2e-moe-multinode}); for the extreme
corner of a huge model at ultra-long context, where even the deduplicated sequence exceeds a single rank, the two are
best used together. We implement and validate this composition and demonstrate it on Gemma-4-31B at 64K context
(Section~\ref{sec:dualkv-sp}).

\paragraph{Approximate variants (linear, sparse, compressed).} These trade accuracy for efficiency by replacing attention with $O(S)$ approximations or by pruning KV entries. They violate DualKV's ``mathematically equivalent to standard attention'' guarantee, and are orthogonal to shared-prompt dedup --- they can be applied per-sequence independently of whether sequences share a prompt.

\paragraph{Summary.} DualKV is the first attention variant to jointly provide (1) shared-prompt KV dedup, (2) shared-prompt compute dedup through the full transformer layer, (3) a mathematically exact training backward with gradient accumulation into the shared $K_c, V_c$, and (4) autograd-compatible integration with PyTorch. No existing technique can be dropped into a RL policy-update pipeline to replicate DualKV's savings.


\end{document}